\documentclass{article}

\PassOptionsToPackage{numbers,sort&compress}{natbib}

\usepackage[preprint]{neurips_2026}

\usepackage[utf8]{inputenc} 
\usepackage[T1]{fontenc}    
\usepackage{hyperref}       
\usepackage{url}            
\usepackage{booktabs}       
\usepackage{amsfonts}       
\usepackage{nicefrac}       
\usepackage{microtype}      
\usepackage{xcolor}         
\usepackage{xspace}
\usepackage{amsmath}
\usepackage{amsthm}
\usepackage{mathtools}
\usepackage{amssymb}
\usepackage{enumitem}
\usepackage{wrapfig}

\newcommand{\minihead}[1]{\vspace{0.2em}\noindent\textbf{#1}}
\newcommand{\eg}{e.g.}

\newcommand{\iid}{i.i.d.\xspace}

\newtheorem{theorem}{Theorem}
\newtheorem{corollary}{Corollary}

\newtheorem{remark}{Remark}
\newcommand{\ndiffbase}{9--51\%\xspace}
\newcommand{\ndiffstudent}{6--11\%\xspace}

\newcommand{\ndiffdistill}{84--97\%\xspace}
\newcommand{\ntightopd}{15\%\xspace}

\newcommand{\sizeratio}{14,796$\times$\xspace}

\title{Non-vacuous Generalization Bounds for Reinforcement Learning with Verifiable Rewards}

\author{%
  Yuxuan Zhu$^*$ \\
  \texttt{yxx404@illinois.edu} \\
  \And
  Rohan Alur$^{\dagger\;\ddagger}$ \\
  \texttt{ralur@mit.edu} \\
  \And
  Daniel Kang$^{*\;\ddagger}$ \\
  \texttt{ddkang@illinois.edu} \\
  \AND
  $^*$UIUC \;\; 
  $^\dagger$MIT \;\;
  $^\ddagger$Bridgewater AIA Labs
}

\begin{document}

\maketitle

\begin{center}
    \vspace{-2\baselineskip}
    \href{https://github.com/uiuc-kang-lab/rlvr_generalization_bounds}{%
      \raisebox{-0.15em}{\includegraphics[height=1em]{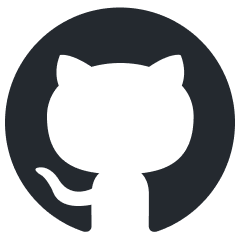}}%
      \hspace{0.45em}\texttt{Code}%
    }%
    \hspace{1.5em}%
    \href{https://huggingface.co/collections/uiuc-kang-lab/rlvr-generalization-bounds}{%
      \raisebox{-0.15em}{\includegraphics[height=1em]{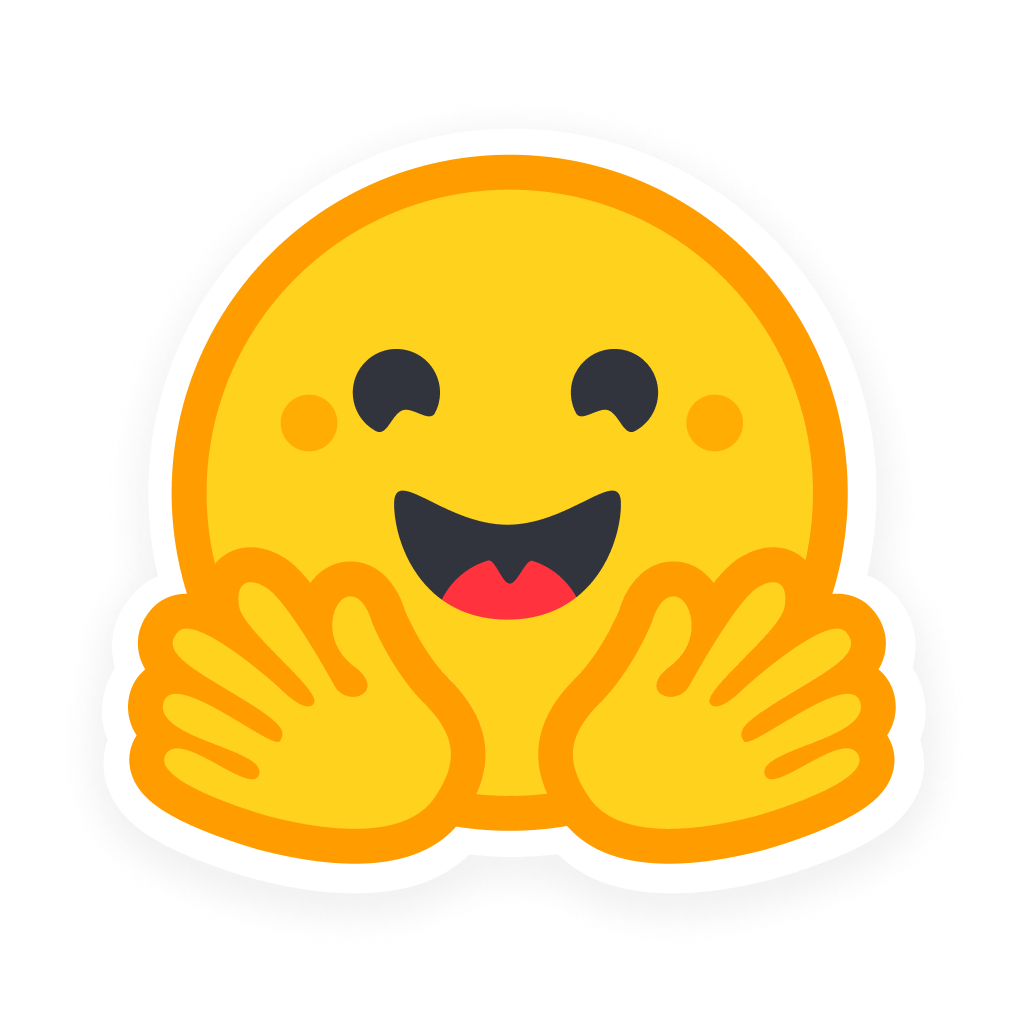}}%
      \hspace{0.45em}\texttt{Datasets \& Models}%
    }
  \end{center}

\begin{abstract}
While reinforcement learning with verifiable rewards (RLVR) is widely used to 
improve the reasoning capabilities of large language models (LLMs), the 
generalizability of the resulting models remains poorly understood. In this work, 
we establish the first non-vacuous generalization bounds for parameter-efficient 
RLVR fine-tuning at the billion-parameter scale. Our approach adapts PAC-Bayes 
compression bounds to this setting, and addresses the inherent stochasticity of 
token generation by applying the Gumbel-max reparameterization trick. To 
operationalize these bounds, we propose the Progressive RLVR framework, which 
integrates RLVR with on-policy distillation, TinyLoRA, and model quantization.
Progressive RLVR empirically retains \ndiffdistill performance of standard 
LoRA fine-tuning while producing models that are \sizeratio more compressible. 
We show that this framework yields non-vacuous generalization bounds in four 
domains: mathematical problem-solving, programming, general-knowledge reasoning, 
and Text-to-SQL. Our bounds exceed the accuracy of the base model by \ndiffbase 
and lie within \ndiffstudent of the accuracy of the fine-tuned models.
\end{abstract}

\section{Introduction}

Reinforcement learning with verifiable rewards (RLVR) has emerged as a dominant 
paradigm for fine-tuning large language models in reasoning-intensive domains
\cite{guo2025deepseek,yang2025qwen3,team2025kimi,chen2025minimax}, including 
mathematical problem-solving \cite{shao2024deepseekmath} and programming 
\cite{guo2025deepseek,wei2025swe}. However, recent work has shown that models 
fine-tuned via RLVR exhibit limited generalizability even for in-domain tasks 
\cite{hu2025breaking}, and are prone to diversity collapse and other 
pathological behaviors \cite{yue2025does,nguyen2025reasoning}. These analyses suggest that RLVR 
specializes models to their training examples rather than enabling generalizable 
reasoning capabilities \cite{hu2025breaking,nguyen2025reasoning}. Consequently, 
understanding the generalizability of RLVR models is critical for downstream 
deployment, particularly in high-stakes domains \cite{hager2024evaluation}.

Unfortunately, verifying generalizability for RLVR is
theoretically challenging. Existing techniques are either inapplicable
or yield vacuous bounds. Classic complexity measures, such as 
Vapnik-Chervonenkis (VC) dimension \cite{blumer1989learnability} and Rademacher 
complexity \cite{bartlett2002rademacher,wang2019generalization}, scale with 
parameter counts and often lead to trivial, uninformative bounds for modern LLMs. 

While prior literature has developed the compression-based Probably 
Approximately Correct Bayesian (PAC-Bayes) framework to compute non-vacuous 
generalization bounds for LLM pre-training \cite{lotfi2023non}, extending these 
guarantees to RLVR introduces several challenges. Unlike the pre-training 
setting of \citet{lotfi2023non}, where the negative log-likelihood loss is 
deterministic for each input sequence given a fixed hypothesis, RLVR rewards 
are defined over stochastically generated sequences whose support is 
combinatorial in sequence length and vocabulary size. This makes the empirical 
reward intractable and introduces a second source of randomness beyond the 
training data distribution. Together, these make the existing PAC-Bayes bounds 
\cite{rodriguez2024more,lotfi2023non,lotfi2022pac} inapplicable.

In this work, we resolve these theoretical bottlenecks and develop the first 
framework that yields non-vacuous generalization bounds for LLMs 
post-trained via parameter-efficient RLVR.

To establish generalization bounds for RLVR, we handle the stochasticity of 
the token generation process by decomposing the generalization gap into an 
extrinsic component over the input data distribution and an intrinsic component 
over the model's own generation process. To tackle the combinatorially 
large support of generated sequences, we apply a Gumbel-max reparameterization 
trick \cite{wang2019generalization,jang2016categorical,gumbel1954statistical} 
to represent token generation as a deterministic function of the input sequence, 
random noise, and hypothesis, enabling the compression-based PAC-Bayes arguments 
\cite{lotfi2022pac} on stochastic token generation.

Since the compression-based PAC-Bayes bound scales with the description length 
of the weight updates, obtaining a non-vacuous bound requires aggressive 
compression of the fine-tuned model. To address this challenge, we develop 
\textit{Progressive RLVR}, a post-training pipeline that achieves aggressive 
compression while preserving high training performance. We first train a strong 
teacher model using RLVR. We then use on-policy distillation 
\cite{agarwal2024policy,lu2025onpolicydistillation} to compress the capability 
of the teacher model into a student model with fewer trainable parameters. 
During on-policy distillation, we apply TinyLoRA \cite{morris2026learning}, 
which reparameterizes the weight updates into as few as one trainable scalar during 
training. Empirically, Progressive RLVR achieves 10\% higher training rewards 
(Figure~\ref{fig:distill-vs-rlvr}) and produces \ntightopd tighter bounds than 
directly applying TinyLoRA to RLVR (Section~\ref{sec:non-vacuous}), due to 
the dense token-level supervision that on-policy distillation provides.

In our experiments, we achieve \emph{the first non-vacuous generalization bounds
for RLVR} on four real-world domains, including mathematical
problem-solving, programming, general-knowledge reasoning, and Text-to-SQL 
translation. Across four domains, our generalization bounds exceed the accuracy 
of the base model by \ndiffbase, and are within \ndiffstudent of the empirical 
accuracy of the fine-tuned models. Furthermore, Progressive RLVR achieves 
excellent performance with high compression rates, recovering \ndiffdistill of 
the teacher models' accuracy, while being \sizeratio more compressible than the 
corresponding teacher model on average.

We summarize our contributions as follows:
\vspace{-0.5em}
\begin{enumerate}[leftmargin=*]
    \item \textbf{Generalization bounds for RLVR}. We establish the first 
    compression-based PAC-Bayesian framework for bounding generalization errors 
    of RLVR, addressing the stochastic reward structure of RLVR via Gumbel-max 
    reparameterization. 
    \item \textbf{Progressive RLVR}. We introduce Progressive RLVR, 
    a parameter-efficient post-training pipeline that leverages TinyLoRA and 
    on-policy distillation to achieve the parameter compression required for 
    tight bounds without significantly sacrificing accuracy.
    \item \textbf{Non-vacuous bounds for billion-parameter models on real-world 
    RLVR datasets}. We provide the first non-vacuous generalization bounds for
    billion-parameter LLMs post-trained via RLVR, providing formal verification
    of RLVR performance on real-world tasks including mathematical 
    problem-solving, programming, general-knowledge reasoning, and Text-to-SQL translation.
\end{enumerate}

\section{Preliminaries}
\label{sec:preliminaries}

In this section, we present the foundational concepts and formulations that 
motivate our framework, specifically focusing on reinforcement learning with 
verifiable rewards (Section \ref{subsec:prelim-rlvr}), the PAC-Bayes approach 
to generalization bounds (Section \ref{subsec:prelim-pacbayes}), and LLM 
fine-tuning with LoRA (Section \ref{subsec:prelim-peft}).

\subsection{Reinforcement Learning with Verifiable Rewards}
\label{subsec:prelim-rlvr}

RLVR has emerged as a dominant paradigm of LLM post-training to enhance 
reasoning capabilities \cite{guo2025deepseek,chen2025minimax,team2025kimi}. 
Relying on objective verifiers (e.g., mathematical correctness 
\cite{shao2024deepseekmath} or unit tests \cite{wei2025swe}) rather than human 
preference models \cite{bai2022training}, RLVR provides deterministic 
reward signals. While the practical instantiations of RLVR 
\cite{shao2024deepseekmath,yu2025dapo,gao2025soft,chen2025minimax} rely on 
various customized clipping mechanisms and optional KL-divergence penalties to 
stabilize training, all RLVR variants share the core optimization objective of 
classic reinforcement learning: optimizing a policy to maximize the expected reward. 

In the context of outcome-based RLVR \cite{shao2024deepseekmath}, the return 
simplifies to a deterministic scalar reward evaluated on the final generated 
response. We write the objective as
\begin{equation*}
    \mathcal{J}_{\text{simplified}}(\theta) = \mathbb{E}_{x \sim \mathcal{P}_0, y \sim \pi_\theta(\cdot \mid x)} \left[ r(x, y) \right],
\end{equation*}
where $x$ represents the prompt drawn from training distribution $\mathcal{P}_0$, 
and $y$ represents the generation sampled from the model's conditional policy 
$\pi_\theta$. We use this objective as the primary expected risk measure to 
establish our generalization bounds in Section \ref{sec:bound}.

\subsection{PAC-Bayesian Generalization and Compression}
\label{subsec:prelim-pacbayes}

The PAC-Bayes framework \cite{alquier2024user} has emerged as 
a critical tool for establishing generalization bounds for deep neural 
networks, overcoming the limitations of parameter-scaling measures 
\cite{blumer1989learnability}. It bounds the generalization gap via the 
divergence between a prior and a posterior distribution defined over a 
hypothesis space $\mathcal{H}$. A central feature of this framework is that the 
bound penalizes the chosen hypothesis only via its likelihood under a pre-defined 
prior and its empirical performance. The procedure used to select the hypothesis 
from $\mathcal{H}$ is unconstrained.

To obtain tight bounds, we follow prior work \cite{lotfi2022pac,lotfi2023non} to 
apply the universal Solomonoff prior \cite{solomonoff1964formal}, where the 
prior probability of a hypothesis scales inversely with its description length 
(compressibility). In the post-training setting, the parameter update (the delta 
in parameter space) contains all the information needed to reconstruct the 
trained model from the pre-trained checkpoint. Therefore, obtaining a tight 
bound for massive LLMs requires reducing the parameter updates to a minimal 
number of parameters. A second challenge specific to RLVR is the intractability 
of the reward structure. In RLVR, the reward is calculated over the 
stochastically generated tokens $y \sim \pi_\theta(\cdot \mid x)$, whose support 
is combinatorial in length and vocabulary size. We address this challenge in 
Section \ref{sec:bound}.

\subsection{Low-Rank Adaptation (LoRA)}
\label{subsec:prelim-peft}

LoRA \cite{hu2022lora} is one of the most popular parameter-efficient 
fine-tuning (PEFT) techniques \cite{han2024parameter} for LLMs 
\cite{han2024parameter,wang2025parameter,xu2026parameter}. Given a frozen 
pre-trained weight $W_0 \in \mathbb{R}^{d \times k}$, LoRA introduces two 
trainable matrices $A \in \mathbb{R}^{r \times k}$ and 
$B \in \mathbb{R}^{d \times r}$ (where rank $r \ll \min(d, k)$). During training,
LoRA modifies the forward pass as: $W_0 x + \Delta W x = W_0 x + \frac{\alpha}{r} B A x$.

While standard LoRA reduces the parameter count, obtaining non-vacuous PAC-Bayes 
compression bounds requires even more extreme compression. To achieve this, we 
adopt the TinyLoRA parameterization \cite{morris2026learning}. 
Building on the SVD-based parameterization of LoRA-XS \cite{balazy2024lora}, 
TinyLoRA leverages the truncated singular value decomposition of the frozen 
pre-trained weight $W_0 \approx U \Sigma V^T$. It replaces the trainable core 
matrix with a low-dimensional trainable vector $v \in \mathbb{R}^u$ projected 
through a set of fixed random matrices $P_i \in \mathbb{R}^{r \times r}$. The 
weight update $\Delta W$ is then reparameterized as:
\begin{equation*}
    \Delta W = U \Sigma \left( \sum_{i=1}^u v_i P_i \right) V^T.
\end{equation*}
With TinyLoRA, we can compress the learned policy update to as few as one 
trainable scalar \cite{morris2026learning}. This extreme parameterization forms 
the architectural basis of our Progressive RLVR framework (Section 
\ref{sec:progressive-RLVR}), allowing us to compute tight compression-based 
bounds.

\section{Generalization Bounds for RLVR}
\label{sec:bound}

We now develop generalization bounds for LLMs post-trained via RLVR. We begin 
by introducing notation and setup (Section \ref{sec:rlvr-bounds}). Then, we 
present our main result (Theorem~\ref{theorem:bound}), which bounds the 
generalization gap by decomposing it into an extrinsic component over the prompt 
distribution and an intrinsic component over the model's generation process.
Next, we discuss two extensions of Theorem~\ref{theorem:bound}: multi-turn
RLVR and out-of-domain (OOD) evaluation (Section~\ref{sec:multi-turn}).
Finally, we show that the empirical reward can be efficiently estimated from a 
subsample of training prompts with a mild additive penalty 
(Section \ref{sec:subsample-bounds}).

\subsection{PAC-Bayes Compression Bounds for RLVR}
\label{sec:rlvr-bounds}

\minihead{Notation and setup.}
Let $\mathcal{P}_0$ denote a distribution over task prompts, and let
$\hat{\mathcal{P}}_0$ denote the uniform distribution over a fixed set of $m$ 
training prompts $\{x_1, \dots, x_m\}$ drawn \iid from $\mathcal{P}_0$. 
Let $\pi_{\theta_{\mathrm{ref}}}$ be a fixed pre-trained base model 
parameterized by $\theta_{\mathrm{ref}}$, and let $\mathcal{H}$ be a finite 
hypothesis space in which each hypothesis $h \in \mathcal{H}$ specifies a 
parameter update yielding a fine-tuned model $\pi_{\theta(h)}$.
Following \citet{lotfi2023non}, we adopt the Solomonoff prior 
\cite{solomonoff1964formal}, defined as $P(h) = Z^{-1} \cdot 2^{-K(h)}$, where $K(h)$ 
denotes prefix Kolmogorov complexity \cite{kolmogorov1965three} and $Z \le 1$ is 
a normalizing constant \cite{lotfi2022pac}.

We define the \emph{population reward} of a hypothesis $h$ as the expected value 
of a deterministic, bounded reward function 
$r \colon \mathcal{X} \times \mathcal{Y} \to [a,\, a + \Delta]$ where 
$\mathcal{X}$ is the prompt space and $\mathcal{Y}$ is the generation space:
\begin{equation*}
    \mathcal{R}(h)
    \;=\;
    \mathbb{E}_{x \sim \mathcal{P}_0}
    \!\left[
        \mathbb{E}_{y \sim \pi_{\theta(h)}(\cdot \mid x)}
        \!\left[ r(x, y) \right]
    \right].
\end{equation*}
Since PAC-Bayes bounds depend on $h$ only through its prior probability $P(h)$ and empirical 
reward (not on the procedure that produced $h$), the reward used in the bound 
need not match the training objective. This enables us to establish bounds on 
task accuracy throughout, even when training optimizes a richer objective. We 
leverage this flexibility in Section \ref{sec:results}.

\minihead{Reparameterization of stochastic decoding.}
While we can bound the outer expectation (over $\mathcal{P}_0$) in 
$\mathcal{R}(h)$ using standard generalization bounds, the inner expectation is 
hypothesis-dependent and taken over sequences 
$y \sim \pi_{\theta(h)}(\cdot \mid x)$, a distribution whose support is 
combinatorial in the sequence length and vocabulary size. This
makes direct evaluation or analytic bounding intractable. To obtain a tractable 
form, we reparameterize the token decoding process by externalizing the 
stochasticity into a fixed, hypothesis-independent noise distribution.

Concretely, consider the autoregressive generation of a sequence 
$y_i = (y_{i,1}, \dots, y_{i, T})$ with a bounded sequence length $T$, a 
temperature $\tau > 0$ and a vocabulary size $V$. At each step $t$, the model 
produces logits $\ell_{i, t} \in \mathbb{R}^V$ and samples a token from the 
categorical distribution
\begin{equation*}
    y_{i,t} \sim \mathrm{Categorical}\!\left(
    \mathrm{softmax}(\ell_{i, t} / \tau)\right),
\end{equation*}
whose probabilities depend on the hypothesis $h$ through the logits. The 
Gumbel-max theorem \citep{gumbel1954statistical,jang2016categorical} provides 
an equivalent construction that externalizes this stochasticity into a fixed 
noise distribution. Namely, for $\xi_{i, t}^{(v)} 
\overset{\mathrm{iid}}{\sim} \mathrm{Gumbel}(0, 1)$,
\begin{equation*}
    y_{i,t} \;=\; \arg\max_{v \in [V]}\!\left(
    \tau^{-1}\ell_{i, t}^{(v)} + \xi_{i, t}^{(v)}\right)
\end{equation*}
is distributed identically to the categorical sample above. 

Collecting all noise variables across tokens and generations, we denote
$\xi_{i,j} = (\xi_{i,j,1}, \dots, \xi_{i,j,T}) \in \mathbb{R}^{T \times V}$
with $\xi_{i,j,t}^{(v)} \overset{\mathrm{iid}}{\sim} \mathrm{Gumbel}(0,1)$ 
and $j \in [n]$ indexing the $n$ independent generation samples per prompt. 
The token generation process becomes a deterministic function 
$y_{i,j} = g(x_i, \xi_{i,j}; h)$ of the prompt, noise, and hypothesis. We 
denote the product distribution over each $\xi_{i,j}$ by $\mathcal{P}_\xi$. 
Under this reparameterization, the population reward can be equivalently 
written as
\begin{equation*}
    \mathcal{R}(h)
    \;=\;
    \mathbb{E}_{x \sim \mathcal{P}_0}
    \left[
        \mathbb{E}_{\xi \sim \mathcal{P}_{\xi}}
        \left[ r(x,\, g(x, \xi;\, h)) \right]
    \right].
\end{equation*}

With this reparameterization, both sources of randomness in $\mathcal{R}(h)$, 
the prompt distribution $\mathcal{P}_0$ and the generation noise 
$\mathcal{P}_\xi$, are now explicit and hypothesis-independent. The following 
theorem exploits this structure to bound the gap between the population reward 
and its empirical estimate. We defer the proof of Theorem \ref{theorem:bound}
to Appendix \ref{subsec:app-bound-proof}.

\begin{theorem}[PAC-Bayes bound for RLVR]
\label{theorem:bound}

For any $\delta \in (0, 1)$, we have 
\begin{align*}
    \mathbb{P}\bigg[\forall h \in \mathcal{H}, \mathcal{R}(h) \ge 
    & \frac{1}{mn} \sum_{i = 1}^m \sum_{j = 1}^n r\left(x_i, g\left(x_i, \xi_{i,j}; h\right)\right) \\ 
    & - \Delta \left(\sqrt{\frac{1}{n}} + 1\right) \sqrt{\frac{C(h)\log(2) + 2\log(C(h)) + \log(2/\delta)}{2m}}\bigg] \ge 1 - \delta
\end{align*}
where $\{x_i\}_{i=1}^m \overset{\mathrm{iid}}{\sim} \mathcal{P}_0$, $\{\xi_{i,j}\}_{i \in [m],\, j \in [n]} 
\overset{\mathrm{iid}}{\sim} \mathcal{P}_\xi$, and $C(h)$ is the 
compression size in bits of $h$ using a pre-defined encoding scheme.
\end{theorem}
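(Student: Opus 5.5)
We prove the statement in two stages: first for a single fixed hypothesis, then uniformly over $\mathcal{H}$ by a weighted union bound. The prior weights come from the prefix code. Throughout, $h$ is fixed before the data are drawn, so both $\{x_i\}$ and $\{\xi_{i,j}\}$ are independent of $h$; this is exactly why the Gumbel-max reparameterization is needed.

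\textbf{Decomposition for fixed $h$.} Write $f_h(x,\xi) = r(x, g(x,\xi;h)) \in [a, a+\Delta]$ and $\bar f_h(x) = \mathbb{E}_{\xi}[f_h(x,\xi)]$, so that $\mathcal{R}(h) = \mathbb{E}_x[\bar f_h(x)]$. We split the gap between the empirical average $\hat{\mathcal{R}}(h) = \frac{1}{mn}\sum_{i,j} f_h(x_i,\xi_{i,j})$ and $\mathcal{R}(h)$ into two terms:
\begin{equation*}
\hat{\mathcal{R}}(h) - \mathcal{R}(h) = \underbrace{\frac{1}{mn}\sum_{i,j}\bigl(f_h(x_i,\xi_{i,j}) - \bar f_h(x_i)\bigr)}_{\text{intrinsic}} + \underbrace{\frac{1}{m}\sum_i \bar f_h(x_i) - \mathcal{R}(h)}_{\text{extrinsic}}.
\end{equation*}
The extrinsic term is an average of $m$ i.i.d.\ variables with range $\Delta$. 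By Hoeffding, it exceeds $\Delta\sqrt{\log(1/\delta_1)/(2m)}$ with probability at most $\delta_1$. For the intrinsic term we condition on $x_{1:m}$. It is then an average of $mn$ independent, mean-zero variables, each with range $\Delta$. Hoeffding gives the deviation $\Delta\sqrt{\log(1/\delta_2)/(2mn)} = \Delta\sqrt{1/n}\,\sqrt{\log(1/\delta_2)/(2m)}$, and since the bound holds for every conditioning value it also holds unconditionally. We set $\delta_1=\delta_2=\delta_h/2$ and take a union bound over the two events. For fixed $h$, this gives, with probability at least $1-\delta_h$,
\begin{equation*}
\mathcal{R}(h) \ge \hat{\mathcal{R}}(h) - \Delta\Bigl(\sqrt{1/n}+1\Bigr)\sqrt{\log(2/\delta_h)/(2m)}.
\end{equation*}

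\textbf{Uniformity over $\mathcal{H}$.} Next we allocate $\delta_h$ across hypotheses, following the compression-bound argument of \citet{lotfi2022pac}. The naive choice $\delta_h = \delta\,2^{-C(h)}$ would need $\sum_h 2^{-C(h)}\le 1$, i.e.\ Kraft's inequality, which requires a prefix-free code. The stated term $2\log C(h)$ indicates that the encoding is not assumed prefix-free. Instead we use a prefix-free completion: prepend a self-delimiting encoding of the length $C(h)$. This costs about $2\log_2 C(h)$ extra bits, giving prefix complexity $K(h) \le C(h) + 2\log_2 C(h) + O(1)$. The Solomonoff prior $P(h) = Z^{-1}2^{-K(h)}$ with $Z\le 1$ satisfies $\sum_h P(h)\le 1$, and we set $\delta_h = \delta P(h)$. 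Then
\begin{equation*}
\log(1/P(h)) \le K(h)\log 2 + \log Z \le C(h)\log 2 + 2\log C(h),
\end{equation*}
where the constant and base conversion are absorbed into the exact form of the length code. A union bound over $h\in\mathcal{H}$ then gives total failure probability $\sum_h \delta P(h)\le\delta$. Substituting $\log(2/\delta_h) = \log(2/\delta) + \log(1/P(h))$ yields the stated bound.

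\textbf{Main obstacle.} The main obstacle is bookkeeping the code-length overhead so that it matches $C(h)\log 2 + 2\log C(h)$ exactly. This means choosing a concrete prefix code for the integer $C(h)$, e.g.\ Elias-type, with $\sum_{k} k^{-2}$-style normalization. I would handle it as \citet{lotfi2022pac} do: assign the prior $P(h) \propto 2^{-C(h)}/C(h)^2$ over bitstrings of each length. Summing over the $2^{k}$ strings of length $k$ gives $\sum_k k^{-2}\le \pi^2/6$, and this normalization constant can be handled as in that paper.
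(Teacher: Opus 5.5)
Your proof is correct and follows the paper's argument: the same extrinsic/intrinsic decomposition, Hoeffding on each term (the intrinsic one conditionally on the prompts), and a prior-weighted union bound. The only difference is cosmetic: you split $\delta_h=\delta P(h)$ between the two terms for each hypothesis and then take the union over $\mathcal{H}$, whereas the paper runs two prior-weighted unions at level $\delta/2$ each and combines them, which gives identical constants.

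Your treatment of the $2\log C(h)$ overhead is no looser than the paper's, which likewise drops the additive constant in $K(h)\le C(h)+2\log_2 C(h)+O(1)$. Your explicit weighting in fact makes the bound exact without any normalization: for an injective code with all lengths at least $2$, $\sum_{h} 2^{-C(h)}C(h)^{-2}\le\sum_{k\ge 2}k^{-2}=\pi^2/6-1<1$, so $\delta_h=\delta\,2^{-C(h)}C(h)^{-2}$ yields exactly $C(h)\log 2+2\log C(h)+\log(2/\delta)$.
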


\begin{remark}[Error decomposition]
\label{rem:error-decomposition}
Theorem~\ref{theorem:bound} directly follows from a decomposition of the 
generalization gap into two components, each addressing one source of 
randomness:
\begin{equation*}
    \mathcal{R}(h) - \hat{R}_{m,n}(h) 
    \;=\; \underbrace{\mathcal{R}(h) - \frac{1}{m}\sum_{i=1}^{m} \bar{r}(x_i; h)}
        _{\varepsilon_{\mathrm{ext}}(h)\; \text{(extrinsic error)}}
    \;+\; \underbrace{\frac{1}{m}\sum_{i=1}^{m} \bar{r}(x_i; h) - \hat{R}_{m,n}(h)}
        _{\varepsilon_{\mathrm{int}}(h)\; \text{(intrinsic error)}},
\end{equation*}
where $\bar{r}(x; h) = \mathbb{E}_{\xi}[r(x, g(x, \xi; h))]$ and 
$\hat{R}_{m,n}(h) = \frac{1}{mn}\sum_{i,j} r(x_i, g(x_i, \xi_{i,j}; h))$. 
The extrinsic error $\varepsilon_{\mathrm{ext}}(h)$ measures the generalization 
over the prompt distribution, while the intrinsic error 
$\varepsilon_{\mathrm{int}}(h)$ measures estimation accuracy of the empirical 
reward given $n$ sampled generations per prompt. Bounding each yields the 
$\mathcal{O}(1/\sqrt{m})$ and $\mathcal{O}(1/\sqrt{mn})$ rates in 
Theorem~\ref{theorem:bound}. Because $m$ is fixed by the training set, while $n$ 
is freely chosen at evaluation time, taking $n = \Omega(m)$ shrinks the 
intrinsic rate to $\mathcal{O}(1/m)$, lower-order than the extrinsic 
$\mathcal{O}(1/\sqrt{m})$. Hence, the bound is effectively governed by the 
extrinsic error.
\end{remark}

\subsection{Extensions of Theorem \ref{theorem:bound}}
\label{sec:multi-turn}

We extend Theorem~\ref{theorem:bound} to two practically important
settings: (1) multi-turn RLVR, where the model interacts with an environment
over multiple steps, and (2) out-of-distribution (OOD) evaluation. Multi-turn
RLVR introduces a potential new source of randomness from the environment's 
responses on top of the per-token Gumbel noise treated in 
Section~\ref{sec:rlvr-bounds}. We handle multi-turn RLVR by considering two 
cases: environments whose transitions are deterministic, and environments whose 
stochastic transitions admit a reparameterization.

\minihead{Multi-turn RLVR with deterministic environments.}
Theorem~\ref{theorem:bound} extends without modification to multi-turn RLVR with 
deterministic environments, which cover most practical RLVR settings, such as 
mathematical problem-solving with symbolic verifiers and code execution on 
deterministic tests. In such settings, the environment introduces no additional 
randomness due to deterministic transition and termination functions. Given a 
prompt and interaction history, the next observation and termination signal are 
uniquely determined. The full trajectory $\tau = (o_0, a_1, o_1, \dots, a_H, o_H)$ 
over a bounded horizon $H$ is therefore a deterministic function of the prompt, 
the model's responses, and the environment's responses. Because the model's 
responses are themselves token sequences generated via Gumbel-max 
reparameterization, the trajectory reduces to a deterministic function 
$G(x_i, \xi_{i,j}; h)$, where $\xi_{i,j} \in \mathbb{R}^{H \times T \times V}$ 
collects noise across all $H$ turns. The proof of Theorem~\ref{theorem:bound} 
applies verbatim under the substitution $g \mapsto G$.

\minihead{Multi-turn RLVR with stochastic environments.}
When the environment itself is \emph{stochastic}, its transitions contribute a
second source of randomness, independent of the model's token decoding. If 
stochastic transitions of the environment admit a reparameterization (e.g., tool 
responses expressed as deterministic functions of random noise), we introduce a 
hypothesis-independent noise variable $\eta_{i,j} \sim \mathcal{P}_\eta$ 
alongside the Gumbel noise. The trajectory becomes 
$G(x_i, \xi_{i,j}, \eta_{i,j}; h)$, and Theorem~\ref{theorem:bound} continues to 
hold with $(\xi, \eta)$ jointly playing the role of the noise variable.
Non-stationary environments, such as live tool APIs \cite{wang2026openclaw} or 
environments whose transitions shift between training and deployment,
fall outside the scope of this work and are left for future work.

\minihead{OOD evaluation.}
When a labeled corpus is available from a different target distribution 
$\mathcal{Q}_0$, a similar bound applies to the expected reward 
$\mathcal{R}_{\mathcal{Q}}(h)$ under $\mathcal{Q}_0$. As long as the hypothesis 
is chosen independently of the OOD corpus, the compression penalty 
$C(h)\log 2 + 2\log C(h)$ drops out and only a Hoeffding term of size 
$\mathcal{O}(\Delta/\sqrt{m'})$ in the OOD sample size $m'$ remains. We defer 
the formal statement (Theorem~\ref{theorem:ood-bound}) and proof to
Appendix~\ref{subsec:app-ood-bound}, and an empirical study to 
Appendix~\ref{sec:app-ood-evals}.

\subsection{Subsampling for Efficient Empirical Reward Computation}
\label{sec:subsample-bounds}

The generalization error bound in Theorem~\ref{theorem:bound} involves 
calculating the empirical reward $\frac{1}{mn}\sum_{i=1}^{m}\sum_{j=1}^n r(x_i, g(x_i, \xi_{i,j}; h))$, 
which requires evaluating the RLVR model on all $m$ data points and 
$n$ noise vectors per data point. When $m$ is large, this full-dataset evaluation is 
computationally expensive. With the following corollary, we can estimate the 
empirical reward using a subsample of training prompts, incurring a small 
additive penalty. We defer the proof to Appendix \ref{subsec:subsampling-proof}.

\begin{corollary}
\label{cor:subsampled-eval}
Under the setting of Theorem~\ref{theorem:bound}, with training prompts
$x_1, \dots, x_m \overset{\mathrm{iid}}{\sim} \mathcal{P}_0$ and independent
Gumbel noise vectors $\xi_{i,j} \overset{\mathrm{iid}}{\sim} \mathcal{P}_\xi$ 
for $i \in [m],\, j \in [n]$, let $\{i_1, \dots, i_{m_0}\} \subset [m]$ be a 
subsample drawn uniformly at random, and $h \in \mathcal{H}$ be 
any hypothesis chosen independently of the subsample.
Then for any $\delta \in (0, 1)$,
\begin{align*}
    \mathbb{P}\bigg[
    \mathcal{R}(h) \,\ge\,
    & \frac{1}{m_0 n} \sum_{k=1}^{m_0} \sum_{j=1}^{n}
        r\!\bigl(x_{i_k},\, g(x_{i_k}, \xi_{i_k, j};\, h)\bigr)
    \;-\; \Delta \sqrt{\tfrac{\log(3/\delta)}{2 m_0}} \\
    & \;-\; \Delta\!\left(\sqrt{\tfrac{1}{n}} + 1\right)
        \sqrt{\tfrac{C(h)\log 2 + 2\log C(h) + \log(3/\delta)}{2m}}
    \bigg] \,\ge\, 1 - \delta.
\end{align*}
\end{corollary}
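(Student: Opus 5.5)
The plan is to reduce the corollary to Theorem~\ref{theorem:bound} plus one Hoeffding bound, and combine them with a union bound at confidence levels $2\delta/3$ and $\delta/3$. Write $\hat{R}_{m,n}(h)$ for the full empirical reward over all $m$ prompts, as in Remark~\ref{rem:error-decomposition}. Write $\hat{R}_{m_0,n}(h)$ for the subsampled average appearing in the statement. The decomposition I will use is
\begin{equation*}
    \mathcal{R}(h) - \hat{R}_{m_0,n}(h) \;=\; \bigl(\mathcal{R}(h) - \hat{R}_{m,n}(h)\bigr) \;+\; \bigl(\hat{R}_{m,n}(h) - \hat{R}_{m_0,n}(h)\bigr).
\end{equation*}

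For the first term, I invoke Theorem~\ref{theorem:bound} with confidence parameter $2\delta/3$. Since $\log(2/(2\delta/3)) = \log(3/\delta)$, this gives exactly the compression term with $\log(3/\delta)$. The theorem holds uniformly over all $h\in\mathcal{H}$, so it applies in particular to the chosen $h$. This event has probability at least $1-2\delta/3$ over the draw of the prompts and the noise.

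For the second term, I condition on the prompts $x_{1:m}$ and the noise $\xi$, so the per-prompt averages $z_i = \frac1n\sum_j r(x_i, g(x_i,\xi_{i,j};h))\in[a,a+\Delta]$ are fixed numbers. This needs $h$ to be chosen independently of the subsample. I read that as meaning $h$ may depend on the training data but not on the random index set, so conditioning on $(x,\xi)$ leaves $h$ fixed. Then $\hat{R}_{m_0,n}(h)$ is the mean of $m_0$ draws from the finite population $\{z_i\}$, whose mean is $\hat{R}_{m,n}(h)$. Hoeffding's inequality gives, with conditional probability at least $1-\delta/3$,
\begin{equation*}
    \hat{R}_{m,n}(h) \ge \hat{R}_{m_0,n}(h) - \Delta\sqrt{\log(3/\delta)/(2m_0)}.
\end{equation*}
Hoeffding holds for sampling with replacement, and also without replacement by Hoeffding's 1963 reduction to the with-replacement case. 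Integrating out the conditioning makes this an unconditional probability.

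Finally, a union bound over the two failure events (total failure probability $\delta$), plugged into the decomposition, yields the stated inequality.

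The main subtlety is the independence structure: the subsample must be independent of $h$, and also independent of $(x,\xi)$. With that in place, the conditional Hoeffding step is legitimate, and the uniform-in-$h$ statement of Theorem~\ref{theorem:bound} means the data-dependent choice of $h$ causes no difficulty in the first term.
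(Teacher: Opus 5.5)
Your proposal is correct and follows essentially the same route as the paper. It uses the same decomposition through the full-sample estimator $\hat R_{m,n}(h)$, Theorem~\ref{theorem:bound} at level $2\delta/3$, a conditional Hoeffding bound at level $\delta/3$ on the fixed per-prompt averages, and a union bound; your remark that Hoeffding also covers sampling without replacement is a small but useful addition, since the paper's proof treats the subsample as i.i.d.\ draws with replacement even though the statement writes it as a subset.
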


\section{Progressive RLVR via On-Policy Distillation}
\label{sec:progressive-RLVR}

\begin{figure}[t]
    \centering
    \includegraphics[width=\textwidth]{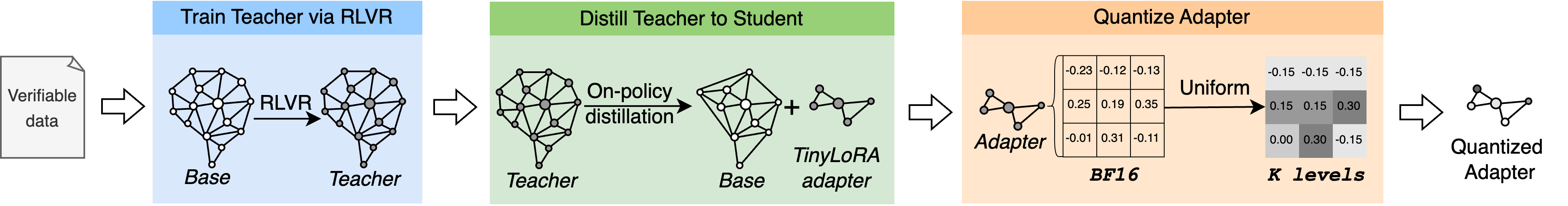}
    \caption{Our proposed Progressive RLVR pipeline, which integrates RLVR with 
    parameter-efficient fine-tuning, on-policy distillation, and quantization to 
    achieve high training rewards while preserving small compression sizes.}
    \label{fig:progressive-rlvr}
\end{figure}

To establish non-vacuous bounds via Theorem~\ref{theorem:bound}, we require 
simultaneously large empirical reward $r(x_i, g(x_i, \xi_{i,j}; h))$ and small 
compression size $C(h)$. While this is 
challenging, the PAC-Bayes bounds in Theorem~\ref{theorem:bound} do not 
constrain the procedure that produces $h$. This decouples \emph{capacity used 
during search} from \emph{capacity charged to the bound}. Namely, we are free to
use a high-capacity teacher to explore the reasoning landscape, as long as the 
final compression size is small. To exploit this asymmetry, we introduce the
Progressive RLVR pipeline (Figure~\ref{fig:progressive-rlvr}).

\minihead{RLVR and distillation}. As shown in 
Figure~\ref{fig:progressive-rlvr}, we first train a high-capacity teacher model 
using a standard RLVR objective. This allows the teacher to freely explore and 
master complex 
\begin{wrapfigure}{r}{0.3\textwidth}
    \centering
    \includegraphics[width=0.3\textwidth]{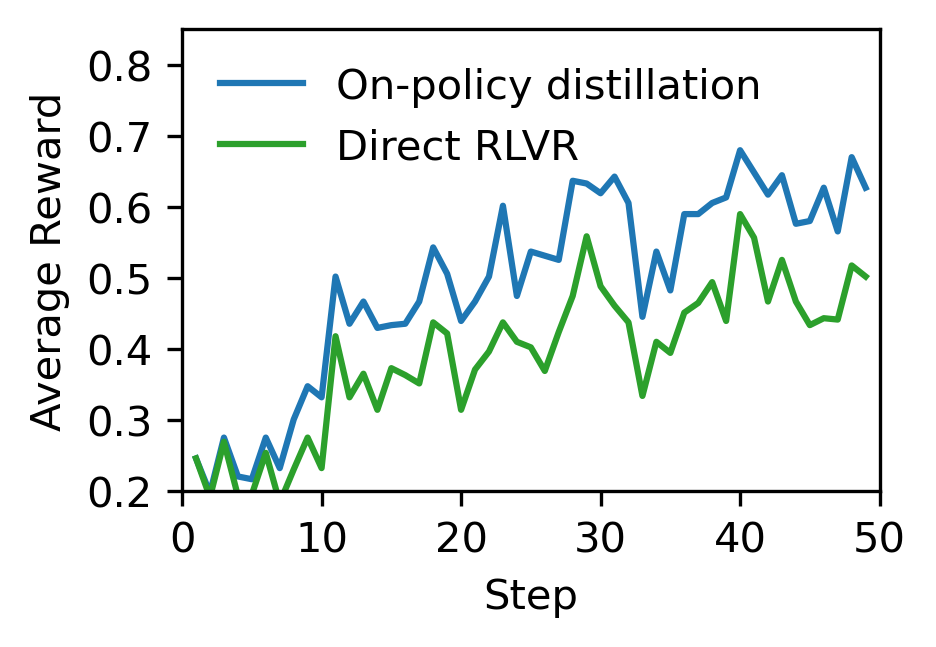}
    \caption{Training rewards of the distillation phase of 
    Progressive RLVR, achieving 10\% higher rewards than direct RLVR on
    average.}
    \label{fig:distill-vs-rlvr}
\end{wrapfigure}
reasoning trajectories via RLVR without restrictive parameter bottlenecks. 
Enabling this exploration is a key feature of our approach, since the bound 
depends only on the final model, not on the procedure. Next, 
we apply on-policy distillation \cite{agarwal2024policy} to compress the 
capabilities learned in RLVR into a student model with TinyLoRA 
\cite{morris2026learning} parameterization. On-policy distillation improves the 
sample efficiency and boosts the performance of ultra-compact models by 
providing token-level optimization signals. In Figure~\ref{fig:distill-vs-rlvr}, 
we show that on the math domain, on-policy distillation achieves 10\% higher 
rewards than direct RLVR on average, given the same number of training steps.

\minihead{Quantization}. The final stage of our Progressive RLVR pipeline 
applies quantization to the trained TinyLoRA adapter of the student 
model to further reduce its description length. Specifically, we quantize the 
parameters of the TinyLoRA adapter into uniformly distributed discrete levels
\cite{lotfi2023non,han2015deep}. Empirically, this 
quantization step vastly reduces the bit-width of the adapter while inducing 
only minor degradation in the acquired empirical reward.

\section{Non-Vacuous RLVR Generalization Bounds for Billion-Parameter LLMs}
\label{sec:results}
In this section, we empirically demonstrate that our framework yields
non-vacuous and tight generalization bounds for billion-parameter LLMs
post-trained via TinyLoRA-based RLVR. We first detail our experimental setup, including
environments, models, and optimization hyperparameters in Section 
\ref{sec:exp-setting}. We then evaluate our bounds and report the main empirical 
findings in Section \ref{sec:non-vacuous}.

\subsection{Experimental Settings}
\label{sec:exp-setting}

\minihead{Datasets and environments.}
To demonstrate the applicability of our framework on diverse application 
domains, we consider four datasets and environments:
\vspace{-0.5em}
\begin{enumerate}[leftmargin=*]
    \item \textbf{Mathematical problem-solving}: We use the mathematics subset 
    of the Eurus-2-RL dataset \cite{cui2025process}, sourced from NuminaMath-CoT 
    \cite{numina_math_datasets}. This contains 456,285 questions ranging from 
    high-school mathematics to International Mathematical Olympiad challenges. 
    We standardize multiple-choice questions to require 
    only the choice index and normalize string representations (\eg, unicode) 
    to LaTeX. We verify the correctness of an answer using 
    \texttt{sympy} and the \texttt{math\_verify}\footnote{\url{https://github.com/huggingface/Math-Verify}} library, yielding a binary reward $r \in \{0, 1\}$. Hence, the average 
    reward is equivalent to task accuracy.
    \item \textbf{Programming}: We construct a programming dataset 
    by merging the programming subset of Eurus-2-RL \cite{cui2025process} with the 
    KodCode-V1 dataset \cite{xu2025kodcode}, yielding 504,885 programming tasks. 
    We evaluate correctness of a generated program against the unit tests 
    provided with each task, yielding a binary reward $r \in \{0, 1\}$.
    \item \textbf{General knowledge}: We combine large-scale datasets 
    from knowledge-intensive domains, including Med-QA \cite{jin2020disease}, 
    LogiQA \cite{liu2020logiqa}, C-Eval \cite{huang2023ceval}, Arc \cite{allenai:arc}, 
    LegalBench \cite{guha2023legalbench,koreeda2021contractnli,hendrycks2021cuad,
    wang2023maud,wilson2016creation,zheng2021does,zimmeck2019maps,
    ravichander2019question,holzenberger2021factoring,lippi2019claudette}, MMLU 
    \cite{hendryckstest2021,hendrycks2021ethics}, and Webinstruct-verified 
    \cite{ma2025generalreasoner}. We exclude the ``Mathematics'' subset of the 
    Webinstruct-verified dataset to avoid overlap with the math domain, yielding 
    438,908 data points. We consider an answer correct if the final answer is 
    wrapped by \texttt{\textbackslash boxed\{\}} and matches 
    the ground truth exactly, leading to a binary reward $r \in \{0, 1\}$.
    \item \textbf{Text-to-SQL}: We use the SynSQL-2.5M dataset \cite{li2025omnisql}, which 
    contains 2,544,390 synthetic natural language queries mapped to SQL queries 
    across 16,583 relational databases. Following SkyRL-SQL \cite{liuskyrl}, we 
    enforce a strict output formatting constraint (\texttt{<solution></solution>}) 
    to reliably extract the generated SQL. Our environment allows the model to 
    interactively query the database (e.g., issuing exploratory schema queries) 
    for up to five turns. We assign a trajectory a reward of 1 if the final 
    query returns the same result as the ground-truth SQL query, 0 if it leads to 
    a different result, and -1 if it is malformed or unparseable. Although 
    training uses a tri-level reward, we report the bound on binary task 
    accuracy to be consistent with the other domains. As discussed in 
    Section~\ref{sec:rlvr-bounds}, our results remain valid because the 
    PAC-Bayes bound depends on the hypothesis only through its prior and 
    empirical reward, not on the training objective.  We defer the non-vacuous 
    generalization bound on the tri-level reward to Appendix \ref{sec:app-sql-reward}.
\end{enumerate}

\begin{figure}
    \centering
    \includegraphics[width=\linewidth]{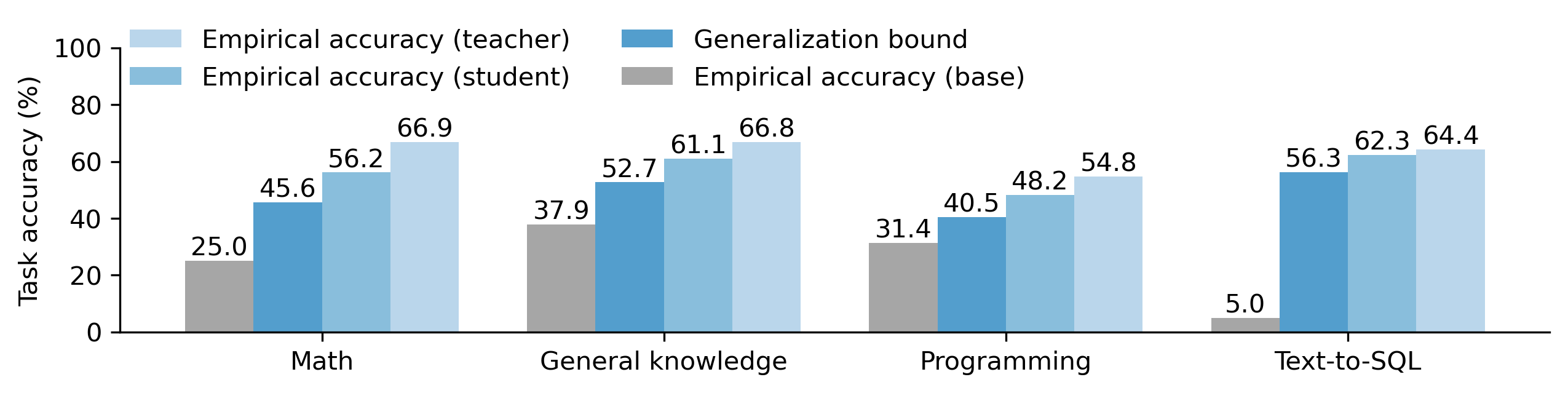}
    \caption{Our framework yields non-vacuous generalization bounds on billion-parameter LLMs 
    post-trained via Progressive RLVR on four domains. We report the 
    generalization bound ($\delta = 0.05$) on the quantized TinyLoRA student, 
    alongside the empirical accuracy of the teacher, student, and base model. 
    The bound exceeds the base model's accuracy in all four domains and lies 
    within \ndiffstudent of the student's accuracy.}
    \label{fig:overall-bounds}
\end{figure}

\minihead{Models.}
We use the pre-trained Qwen3.5-4B \cite{qwen35blog} as the base model 
($\pi_{\theta_{\text{ref}}}$) for both the teacher and student. 
We additionally use Qwen3.5-2B for ablation studies.

\minihead{Hyperparameters.}
To train teachers, we apply standard LoRA with a rank of 64. We use a standard 
importance-sampling objective \cite{shao2024deepseekmath,liu2025understanding} 
without a KL-divergence penalty. For the on-policy distillation, we use TinyLoRA 
with an inner dimension of $u=16$ and a projection rank of $r=16$. In both 
phases, we use a batch size of 64 and 8 rollouts per prompt. We sweep over the 
learning rates $\{10^{-5}, 5\times 10^{-5}, 10^{-4}\}$, selecting the 
configuration that maximizes the training reward. For reward estimation 
(Corollary~\ref{cor:subsampled-eval}), we sample 4,096 data points and 64 
noise vectors per data point. For quantization, we use five levels (2.3 bits) by 
default. We defer more details about the hyperparameters to Appendix 
\ref{appendix:exp_setup} and the encoding scheme to Appendix \ref{sec:app-encoding}.

\minihead{Implementation and hardware.}
We implement the teacher training procedure using Tinker 
API\footnote{\url{https://tinker-docs.thinkingmachines.ai/tinker}} and execute 
it on a local cluster of 8$\times$3GHz vCPUs with 80 GB RAM. We implement 
on-policy distillation with TinyLoRA via SkyRL \cite{cao2025skyrl} and execute 
it with two A100s and 512 GB of RAM.

\subsection{Non-vacuous Bounds on Real-World RLVR Tasks}
\label{sec:non-vacuous}

\begin{figure}
    \centering
    \begin{minipage}{0.49\textwidth}
        \centering
        \includegraphics[width=\textwidth]{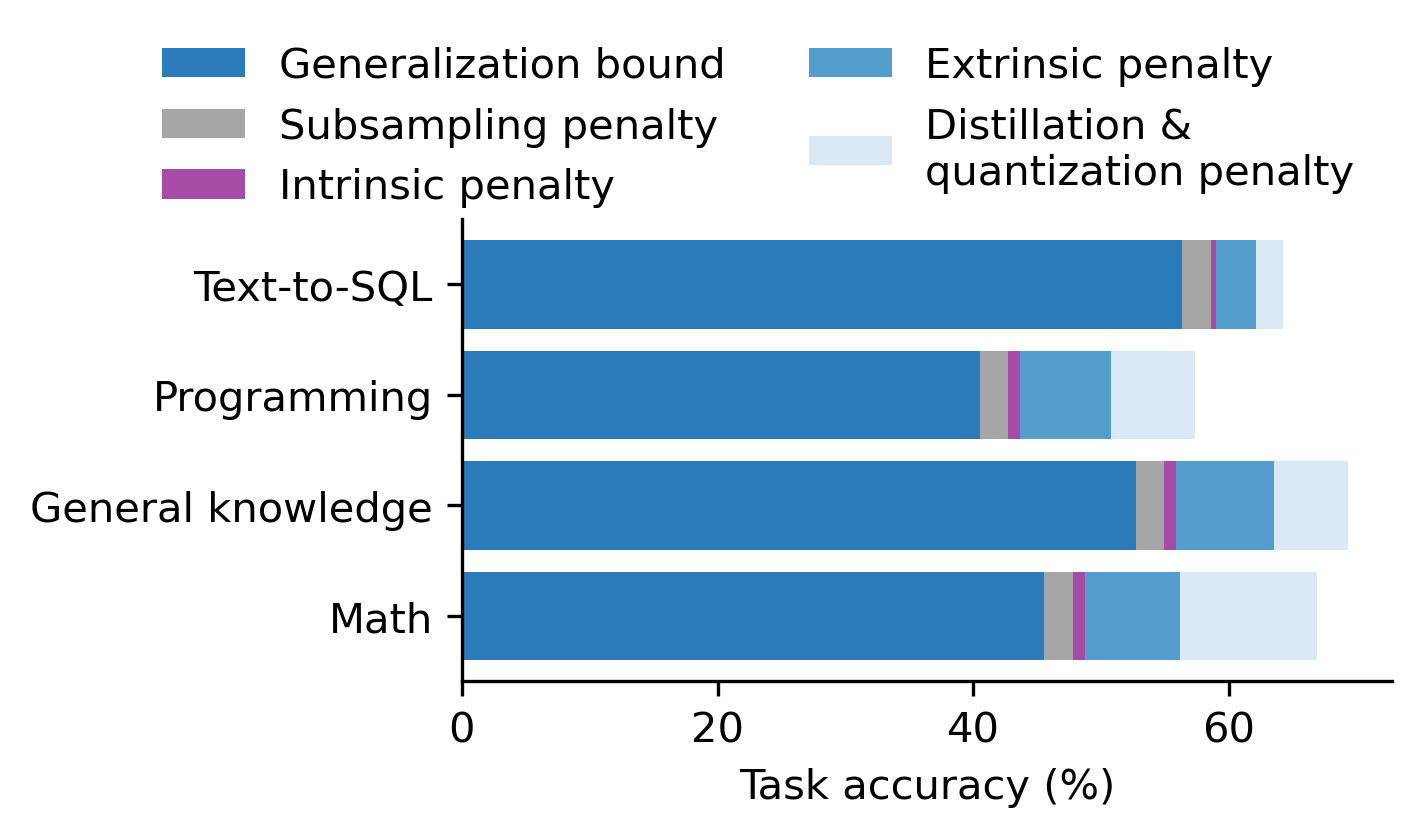}
        \caption{Decomposition of the teacher's accuracy (the total bar 
        length) into our generalization bounds and the penalties separating them.}
        \label{fig:decomp}
    \end{minipage}
    \hfill
    \begin{minipage}{0.45\textwidth}
        \centering
        \includegraphics[width=\textwidth]{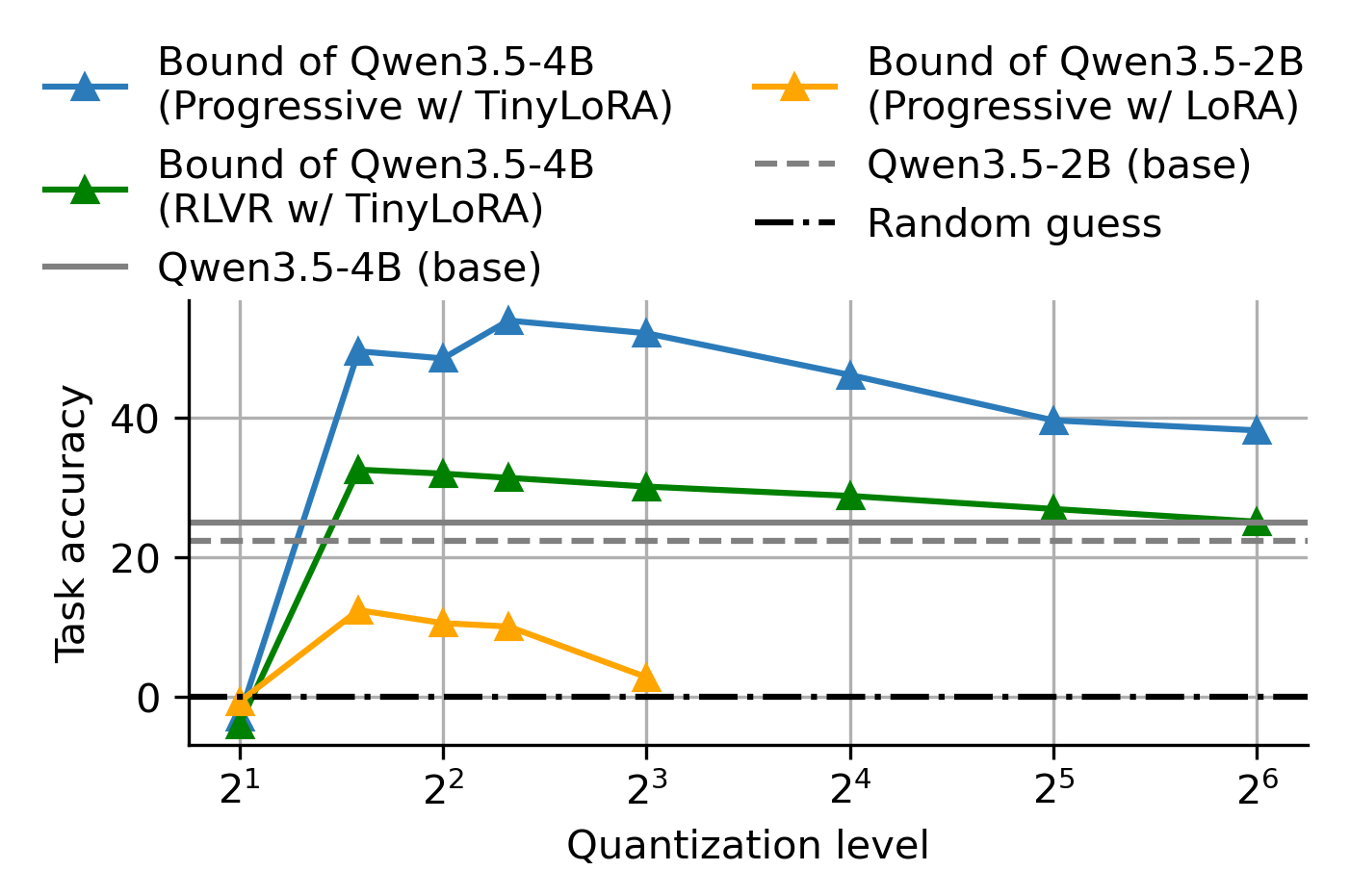}
        \caption{Ablation of the Progressive RLVR pipeline. All bounds are 
        computed at $\delta = 0.05$. Bounds below zero are vacuous.}
        \label{fig:baselines}
    \end{minipage}
\end{figure}

\minihead{Non-vacuous bounds across four domains.} 
In Figure~\ref{fig:overall-bounds}, we present our primary results, the 
generalization bounds from Corollary~\ref{cor:subsampled-eval}. As shown, the 
generalization bounds are non-vacuous (outperforming both random guess and the 
base model) in all domains, indicating that the model fine-tuned via Progressive 
RLVR endows the base model with new capabilities that generalize beyond the 
training data. Specifically, our bounds exceed the accuracy of the base model by 
\ndiffbase, providing meaningful verification for the generalizability of 
billion-parameter LLMs post-trained via Progressive RLVR. No prior framework 
produces non-vacuous generalization bounds for billion-parameter RLVR models on 
real-world datasets.

In addition to being non-vacuous, our bounds are tight. As shown, the gap 
between the student's accuracy and the generalization bound ranges over 
\ndiffstudent across domains. This consistency across different tasks suggests 
that the tightness of our bounds is governed by properties of the training 
setup (sample sizes, compression) rather than domain-specific factors.

\minihead{Decomposition of the bounds.}
In Figure~\ref{fig:decomp}, we break down the accuracy of the teacher into our 
generalization bound and the penalties separating the bound from the teacher's 
accuracy. As shown, the dominant 
penalty differs across domains. For Text-to-SQL, all penalties are small 
(total 8\%), reflecting both the student's close match to the teacher and the 
tight extrinsic penalty due to a large training set of 2.5 million examples. 
For other domains, we identify similar contributions from the extrinsic 
penalty (7--8\%) and the distillation and quantization penalty (6--11\%), 
suggesting that additional training prompts and a better optimized TinyLoRA 
configuration would tighten these bounds. In addition, we find that the intrinsic 
generalization gaps are negligible in all domains, confirming that 64 generations 
per prompt are sufficient for reward estimation.

\minihead{Each component of Progressive RLVR is necessary.} 
In Figure~\ref{fig:baselines}, we ablate two ingredients of 
Progressive RLVR: (i) on-policy distillation (removed in the ``RLVR w/ 
TinyLoRA'') and (ii) TinyLoRA (replaced with standard LoRA, $r=1$). Since 
standard LoRA applied directly to Qwen3.5-4B yields vacuous bounds across all 
quantization levels due to the LoRA adapter's substantially larger parameter 
count, we omit this curve from the figure. We additionally show a baseline using 
a Qwen3.5-2B and standard LoRA with a rank of 1 as the student.

We find that removing on-policy distillation reduces the bound by \ntightopd 
on average relative to our progressive method, confirming that the dense 
token-level supervision from distillation is essential for training the compact 
student under the parameter budget our bound requires. Furthermore, the bounds 
via standard LoRA with Qwen3.5-2B are lower than the base 2B model's accuracy
and slightly higher than random guess, showing that a smaller model 
is not a substitute for Progressive RLVR.


\section{Discussion}
\label{sec:discussion}
\minihead{Toward verifiable specialization.} Our work takes a first step toward
a deployment paradigm we call \emph{verified specialization}: training a
domain-specific RLVR model and obtaining, before deployment, a high-probability
lower bound on its expected accuracy on unseen data drawn from the same
distribution as the training set. This vision is increasingly relevant as
practitioners specialize open-weight models on proprietary or internal data and
find them outperforming larger general-purpose models on target tasks 
\cite{databricks-RLVR}. Without theoretical verification, it remains unclear 
whether the RLVR model overfits or generalizes. Our bounds 
for Qwen3.5-4B exceed the base model's accuracy by \ndiffbase, showing that a
meaningful verification of this kind is achievable at billion-parameter scale. 
Beyond establishing this in theory, our proposed Progressive RLVR offers a 
concrete recipe for achieving it in practice: train a high-capacity teacher, 
distill into a compact student, and quantize. The effectiveness of this recipe 
further suggests that distillation plays a role beyond cost reduction, enabling 
verifiable generalization that pipelines relying on compression alone struggle 
to achieve.

\minihead{Limitations and future directions.} 
Our work has two limitations that could motivate directions for future research. 
First, our bounds apply only when the deployment distribution matches the training 
distribution. They fail to hold under domain shift without labeled out-of-domain 
data. Second, our experiments are at the 4B-parameter scale, while real-world 
deployed models (\eg, Kimi-K2.6 \cite{kimi26-blog} and Qwen3.5-397B-A17B 
\cite{qwen35blog}) are often orders of magnitude larger. The compression budget 
our bounds require has not been shown achievable at that scale without 
sacrificing empirical reward. Therefore, we call for future study on the 
generalizability of larger-scale models in practical use.

\section{Related Work}
\label{sec:related-work}

\minihead{RLVR and its generalizability.}
Since the introduction of GRPO \cite{shao2024deepseekmath}, subsequent work has 
proposed various algorithmic improvements to address stability and 
optimization biases in RLVR. DAPO \cite{yu2025dapo} removes the KL penalty and 
introduces dynamic clipping bounds to encourage exploration. Similarly, CISPO 
\cite{chen2025minimax} and SAPO \cite{gao2025soft} utilize asymmetric clipping 
to maintain stable entropy. Dr.\xspace GRPO mitigates the optimization bias that 
inflates incorrect response lengths \cite{liu2025understanding}, while TIS 
resolves the systematic divergence between sampling and learning in 
decoupled RL training systems \cite{yao2025offpolicy}.

Despite these algorithmic successes, a tension exists in the empirical 
literature regarding RLVR's generalizability. Some studies show that RLVR's 
generalization to unseen data is fragile \cite{hu2025breaking}. RLVR 
often induces narrowed exploration that forces models to overfit to highly 
rewarded training trajectories \cite{yue2025does,nguyen2025reasoning}. 
Conversely, other comparative studies argue that RL post-training fosters 
broader generalization than standard supervised fine-tuning \cite{chu2025sft}. 
This empirical debate underscores the critical need for rigorous theoretical 
generalization bounds.

\minihead{Generalization Bounds.}
Establishing generalization bounds for deep neural networks is a central 
challenge in learning theory. Classic complexity measures, such as 
VC dimension \cite{blumer1989learnability} and Rademacher complexity 
\cite{bartlett2002rademacher}, scale with parameter counts and yield vacuous 
bounds for modern models \cite{zhang2016understanding,nagarajan2019uniform}. 
While norm-based methods attempt to explain generalization via small weight 
norms \cite{neyshabur2015norm,golowich2018size}, the large spectral norms of 
practical networks often make these bounds similarly vacuous 
\cite{arora2018stronger}.

PAC-Bayes \cite{lotfi2022pac,grunwald2019tight} has emerged as an
important theory to explain the generalization of deep neural networks. Prior 
work has established non-vacuous compression-based PAC-Bayes bounds for LLM 
pre-training \cite{lotfi2023non}. However, extending these bounds to 
RLVR is non-trivial. Reinforcement learning optimizes over a non-stationary 
state-action distribution dependent on the active policy, violating the 
\emph{i.i.d.} assumptions of standard supervised bounds. While prior work 
\cite{wang2019generalization} established bounds for specific reparameterizable 
RL, their results are limited to simulation with small-scale networks. 

\minihead{Parameter-Efficient Fine-Tuning.}
To reduce the massive computational costs of fine-tuning LLMs, a wide range of 
PEFT techniques has emerged \cite{wang2025parameter,xu2026parameter}. LoRA is 
one of the most successful PEFT techniques for fine-tuning LLMs
\cite{hu2022lora,schulman2025lora}. Beyond the standard approach 
\cite{hu2022lora}, recent work on LoRA focuses on structural optimization and extreme compression. For instance, PLoP 
\cite{hayou2025plop} automates adapter placement by targeting modules with low 
feature norms. Other methods structurally redefine the parameter space: 
Uni-LoRA \cite{li2025uni} flattens parameters into high-dimensional vectors, 
CrossSpectra \cite{zhang2025crossspectra} parameterizes global adaptations via 
sparse spectral coefficients, and TRAC \cite{ye2025trac} exploits layer 
redundancy via tensor-train cores. Further innovations include principal column 
gradient projections \cite{hwang2025pica}, and quantization frameworks 
\cite{li2023loftq,Xu2023qalora}. While these methods address practical memory 
constraints, our framework specifically leverages the TinyLoRA approach 
\cite{morris2026learning} for achieving the aggressive compression required by 
PAC-Bayes compression bounds.

\section{Conclusion}

We present the first non-vacuous generalization bounds for LLMs post-trained
via RLVR. Our approach leverages PAC-Bayes compression bounds and uses a 
Gumbel-max reparameterization to account for the stochasticity of token 
generation and the combinatorially large support of the language model's output 
distribution. To meet the compression size and accuracy requirements of the 
theoretical bounds, we introduce the Progressive RLVR pipeline, which integrates 
RLVR with TinyLoRA, on-policy distillation, and quantization. Empirically, we 
demonstrate non-vacuous bounds for billion-parameter LLMs across four real-world 
application domains, with Progressive RLVR achieving tighter bounds than direct
RLVR with TinyLoRA. We hope this work provides a foundation for verifying the 
generalization of RLVR models, particularly in high-stakes applications.

\begin{ack}
We thank John Schulman for valuable feedback on this work. This research was 
supported in part by a sponsorship from Bridgewater AIA Labs. We also gratefully 
acknowledge CloudLab \cite{Duplyakin+:ATC19} for providing the computational 
resources used to conduct the experiments in this work.
\end{ack}


\bibliographystyle{plainnat}
\bibliography{reference}







\appendix
\section{Proof of Generalization Bounds}
\label{app:proof}

\subsection{Notation}

We summarize the notation in Table \ref{tab:notation}.

\begin{table}[t]
\centering
\caption{Summary of notation.}
\label{tab:notation}
\begin{tabular}{@{}cl@{}}
\toprule
\textbf{Symbol} & \textbf{Description} \\
\midrule
\multicolumn{2}{@{}l}{\textit{Spaces and distributions}} \\
$\mathcal{X}$ & Prompt space \\
$\mathcal{Y}$ & Generation (sequence) space \\
$\mathcal{P}_0$ & Population distribution over task prompts \\
$\hat{\mathcal{P}}_0$ & Empirical distribution over $m$ training prompts \\
$\mathcal{P}_\xi$ & Product distribution over Gumbel noise vectors \\
$\mathcal{H}$ & Finite hypothesis space \\
\midrule
\multicolumn{2}{@{}l}{\textit{Models and hypotheses}} \\
$\theta_{\mathrm{ref}}$ & Parameters of the frozen pre-trained base model \\
$\pi_{\theta_{\mathrm{ref}}}$ & Base model policy \\
$h$ & Hypothesis (parameter update) in $\mathcal{H}$ \\
$\pi_{\theta(h)}$ & Fine-tuned model policy under hypothesis $h$ \\
\midrule
\multicolumn{2}{@{}l}{\textit{Generation and reward}} \\
$V$ & Vocabulary size \\
$T$ & Maximum sequence length \\
$\tau$ & Sampling temperature \\
$\ell_{i,t} \in \mathbb{R}^V$ & Logits at step $t$ for prompt $x_i$ \\
$\xi_{i,j} \in \mathbb{R}^{T \times V}$ & Gumbel noise for prompt $i$, generation $j$ \\
$g(x_i, \xi_{i,j};\, h)$ & Deterministic generation function (reparameterized) \\
$r \colon \mathcal{X} \times \mathcal{Y} \to [a,\, a{+}\Delta]$ & Deterministic, bounded reward function \\
$\Delta$ & Range of the reward function \\
$\bar{r}(x; h)$ & Expected reward: $\mathbb{E}_{\xi}[r(x, g(x, \xi; h))]$ \\
$\mathcal{R}(h)$ & Population reward of hypothesis $h$ \\
\midrule
\multicolumn{2}{@{}l}{\textit{Sampling and evaluation}} \\
$m$ & Number of training prompts \\
$n$ & Number of generations (noise draws) per prompt \\
$m_0$ & Number of subsampled prompts ($m_0 \le m$) \\
\midrule
\multicolumn{2}{@{}l}{\textit{Complexity and prior}} \\
$K(h)$ & Prefix Kolmogorov complexity of $h$ \\
$C(h)$ & Compression size of $h$ in bits (upper bound on $K(h)$) \\
$P(h)$ & Solomonoff prior: $Z^{-1} \cdot 2^{-K(h)}$ \\
$Z$ & Normalizing constant ($Z \le 1$) \\
\midrule
\multicolumn{2}{@{}l}{\textit{Error decomposition}} \\
$\varepsilon_{\mathrm{ext}}(h)$ & Extrinsic error (prompt generalization gap) \\
$\varepsilon_{\mathrm{int}}(h)$ & Intrinsic error (generation Monte Carlo gap) \\
$\varepsilon_{\mathrm{sub}}(h)$ & Subsampling error (Corollary~\ref{cor:subsampled-eval}) \\
$\delta$ & Confidence parameter \\
\bottomrule
\end{tabular}
\end{table}

\subsection{Proof of Theorem \ref{theorem:bound}}
\label{subsec:app-bound-proof}



\begin{proof}

We decompose the generalization gap into two components:
\begin{align*}
& \mathcal{R}(h) 
    - \frac{1}{mn} \sum_{i=1}^{m} \sum_{j=1}^n r\!\bigl(x_i, g(x_i, \xi_j; h)\bigr) \\
&= \underbrace{\mathbb{E}_{x \sim \mathcal{P}_0} 
    \!\left[
        \mathbb{E}_{\xi \sim \mathcal{P}_{\xi}}
        \!\left[ r(x, g(x, \xi; h)) \right]
    \right] 
    - \frac{1}{m}\sum_{i=1}^{m} 
      \mathbb{E}_{\xi \sim \mathcal{P}_{\xi}}
      \!\left[ r(x_i, g(x_i, \xi; h)) \right]}_{\textstyle 
      \varepsilon_{\mathrm{ext}}(h) \;(\text{extrinsic error})} \\[6pt]
&\quad+ \underbrace{\frac{1}{m}\sum_{i=1}^{m}
      \mathbb{E}_{\xi \sim \mathcal{P}_{\xi}}
      \!\left[ r(x, g(x, \xi; h)) \right]
    - \frac{1}{mn} \sum_{i=1}^{m} \sum_{j=1}^n r\!\bigl(x_i, g(x_i, \xi_j; h)\bigr)}_{\textstyle 
      \varepsilon_{\mathrm{int}}(h) \;(\text{intrinsic error})}
\end{align*}
We bound each term separately.

\minihead{Bounding the extrinsic error.}
Recall that $\hat{\mathcal{P}}_0$ is the uniform distribution over $m$ training 
prompts drawn \iid from $\mathcal{P}_0$. Define 
$\bar{r}(x; h) \coloneqq \mathbb{E}_{\xi \sim \mathcal{P}_{\xi}}
[r(x, g(x, \xi; h))]$. Since $r \in [a, a+\Delta]$, the function $\bar{r}$ 
is bounded in the same interval. The extrinsic error is then
\begin{equation*}
    \varepsilon_{\mathrm{ext}}(h) 
    = \mathbb{E}_{x \sim \mathcal{P}_0}[\bar{r}(x; h)] 
    - \frac{1}{m}\sum_{i=1}^{m} \bar{r}(x_i; h),
\end{equation*}
which is the deviation of the sample mean of $m$ \iid bounded random variables 
from their population mean. By Hoeffding's inequality, for any $t_1 > 0$,
\begin{equation*}
    \mathbb{P}\!\left[\varepsilon_{\mathrm{ext}}(h) \ge -t_1 \right] 
    \ge 1 - \exp\!\left(-\frac{2m\, t_1^2}{\Delta^2}\right).
\end{equation*}

To ensure this bound holds uniformly over all $h \in \mathcal{H}$, we distribute 
half of the total confidence budget, $\delta/2$, across hypotheses using the 
Solomonoff prior, allocating a failure probability of 
$\tfrac{1}{2}\delta \cdot P(h)$ to each $h$. Setting the Hoeffding tail equal 
to this budget and applying the union bound yields
\begin{equation*}
    \exp\!\left(-\frac{2m\, t_{1,h}^2}{\Delta^2}\right) 
    = \tfrac{1}{2}\delta \cdot P(h).
\end{equation*}
Solving for $t_{1,h}$:
\begin{equation*}
    t_{1,h} = \Delta\sqrt{\frac{\log(1/P(h)) + \log(2/\delta)}{2m}}.
\end{equation*}
Since $\sum_{h \in \mathcal{H}} P(h) \le 1$, the union bound gives
\begin{equation*}
    \mathbb{P}\!\left[
      \forall h \in \mathcal{H},\; 
      \varepsilon_{\mathrm{ext}}(h) \ge - t_{1,h}
    \right]
    \ge 1 - \tfrac{1}{2}\delta.
\end{equation*}

\minihead{Bounding the intrinsic error.}
Conditional on the training prompts $x_1, \dots, x_m$, the intrinsic error 
measures the deviation of a Monte Carlo estimate from its expectation. Because 
the noise vectors $\xi_{i,j}$ are drawn independently for each prompt, the $mn$ random variables 
$r(x_i, g(x_i, \xi_{i,j}; h))$ are mutually independent and each bounded in 
$[a, a + \Delta]$. Note that Hoeffding's inequality requires only independence 
and uniform boundedness, not identical distributions --- the reward distributions 
may differ across prompts. Applying Hoeffding's inequality over $mn$ independent 
samples and distributing the remaining budget $\delta/2$ via the same 
prior-weighted union bound, we obtain
\begin{equation*}
    t_{2,h} = \Delta\sqrt{\frac{\log(1/P(h)) + \log(2/\delta)}{2mn}},
\end{equation*}
with
\begin{equation*}
    \mathbb{P}\!\left[
      \forall h \in \mathcal{H},\; 
      \varepsilon_{\mathrm{int}}(h) \ge - t_{2,h}
    \right]
    \ge 1 - \tfrac{1}{2}\delta.
\end{equation*}

\minihead{Combining the bounds.}
Applying the union bound over the two events, with probability at least 
$1 - \delta$, for all $h \in \mathcal{H}$ simultaneously:
\begin{equation*}
    \mathcal{R}(h) - \frac{1}{mn} \sum_{i=1}^{m} \sum_{j=1}^n r\!\bigl(x_i, g(x_i, \xi_j; h)\bigr)
    = \varepsilon_{\mathrm{ext}}(h) + \varepsilon_{\mathrm{int}}(h)
    \le t_{1,h} + t_{2,h}.
\end{equation*}
Let $B(h) \coloneqq \log(1/P(h)) + \log(2/\delta)$. Then
\begin{equation*}
    t_{1,h} + t_{2,h} 
    = \Delta\!\left(\sqrt{\frac{B(h)}{2m}} + \sqrt{\frac{B(h)}{2mn}}\right)
    = \Delta\!\left(1 + \sqrt{\frac{1}{n}}\right)\sqrt{\frac{B(h)}{2m}}.
\end{equation*}

\paragraph{Upper-bounding the Kolmogorov complexity.}
Since the true Kolmogorov complexity $K(h)$ is uncomputable, we upper-bound
$\log(1/P(h))$ using a practical compression size $C(h)$ in bits. Encoding $h$
with a self-delimiting prefix code \citep{lotfi2023non} gives
\begin{equation*}
    \log\frac{1}{P(h)} = \log Z + K(h)\log 2 
    \;\le\; C(h)\log 2 + 2\log C(h).
\end{equation*}
Substituting into $B(h)$ and rearranging yields the final bound and completes the proof.
\end{proof}

\subsection{Proof of Corollary \ref{cor:subsampled-eval}}
\label{subsec:subsampling-proof}


\begin{proof}
We decompose the gap between the population reward and the subsampled
empirical estimate by inserting the \emph{full-data} estimator
$\hat{R}_{m, n}(h) \coloneqq \tfrac{1}{mn}\sum_{i=1}^{m}\sum_{j=1}^{n}
r(x_i, g(x_i, \xi_{i,j}; h))$ as an intermediate quantity:
\begin{equation*}
    \mathcal{R}(h) - \hat{R}_{m_0, n}(h)
    \;=\; \underbrace{\bigl[\mathcal{R}(h) - \hat{R}_{m, n}(h)\bigr]}_{\text{full-sample gap}}
    \;+\; \underbrace{\bigl[\hat{R}_{m, n}(h) - \hat{R}_{m_0, n}(h)\bigr]}_{ 
      \varepsilon_{\mathrm{sub}}(h)\;\text{(subsampling gap)}}.
\end{equation*}
This decomposition allows us to invoke Theorem~\ref{theorem:bound}
directly as a black box for the first term, while the second term reduces
to a clean Hoeffding bound on subsampling from a fixed finite population.
We split the confidence budget to $2\delta/3$ for the first term and $\delta/3$
for the second term. 
 
\minihead{Bounding the full-sample gap.}
Applying Theorem~\ref{theorem:bound} with confidence parameter
$2\delta/3$, with probability at least $1 - 2\delta/3$, for all
$h \in \mathcal{H}$ simultaneously,
\begin{equation*}
    \mathcal{R}(h) - \hat{R}_{m, n}(h)
    \;\le\; \Delta\!\left(\sqrt{\tfrac{1}{n}} + 1\right)
    \sqrt{\tfrac{B_{3}(h)}{2m}},
\end{equation*}
where $B_{3}(h) \coloneqq C(h)\log 2 + 2\log C(h) + \log(3/\delta)$.
 
\minihead{Bounding the subsampling gap.}
Condition on the training prompts $\{x_i\}_{i=1}^{m}$ and the noise vectors
$\{\xi_{i,j}\}_{i \in [m],\, j \in [n]}$. Define the per-prompt empirical
averages
\begin{equation*}
    s_i(h) \;\coloneqq\; \tfrac{1}{n}\sum_{j=1}^{n}
    r\!\bigl(x_i, g(x_i, \xi_{i,j};\, h)\bigr) \;\in\; [a,\, a+\Delta],
    \qquad i \in [m].
\end{equation*}
Once we condition on the prompts and noise, the values $s_1(h), \dots, s_m(h)$
are \emph{fixed} bounded scalars. The full-data estimator is their mean,
$\hat{R}_{m, n}(h) = \tfrac{1}{m}\sum_{i=1}^{m} s_i(h)$, and the subsampled
estimator is the corresponding subsample mean,
$\hat{R}_{m_0, n}(h) = \tfrac{1}{m_0}\sum_{k=1}^{m_0} s_{i_k}(h)$.
 
Because the indices $\{i_k\}_{k=1}^{m_0}$ are drawn uniformly at random from
$[m]$ and are independent of $h$, the random variables
$s_{i_1}(h), \dots, s_{i_{m_0}}(h)$ are i.i.d.\ draws (with replacement) from
the empirical distribution over $\{s_1(h), \dots, s_m(h)\}$, with common mean
$\hat{R}_{m, n}(h)$ and values in $[a, a+\Delta]$. Applying Hoeffding's
inequality with budget $\delta/3$,
\begin{equation*}
    \mathbb{P}\!\left[
      \varepsilon_{\mathrm{sub}}(h) \,\le\, 
      \Delta\sqrt{\tfrac{\log(3/\delta)}{2 m_0}}
    \right] \,\ge\, 1 - \tfrac{1}{3}\delta.
\end{equation*}
Crucially, no union bound over $\mathcal{H}$ is required for this term:
$h$ is fixed before the subsample is drawn, so the bound holds for any
\emph{single} $h$ and the penalty is independent of $C(h)$. Marginalizing
over the conditioning preserves the unconditional probability statement.
 
\minihead{Combining the bounds.}
By a union bound over the two events, with probability at least $1 - \delta$,
for all $h \in \mathcal{H}$ simultaneously,
\begin{equation*}
    \mathcal{R}(h) - \hat{R}_{m_0, n}(h)
    \;\le\; \Delta\sqrt{\tfrac{\log(3/\delta)}{2 m_0}}
    \;+\; \Delta\!\left(\sqrt{\tfrac{1}{n}} + 1\right)
    \sqrt{\tfrac{B_3(h)}{2m}},
\end{equation*}
which is the stated bound. This completes the proof.
\end{proof}

\subsection{Out-of-Distribution Extension}
\label{subsec:app-ood-bound}

\minihead{Setup.}
Let $\mathcal{Q}_0$ denote an arbitrary target prompt distribution, possibly
different from $\mathcal{P}_0$. Let
$\{x'_k\}_{k=1}^{m'} \overset{\mathrm{iid}}{\sim} \mathcal{Q}_0$ denote an
OOD evaluation corpus of size $m'$, with independent Gumbel noise vectors
$\xi'_{k,j} \overset{\mathrm{iid}}{\sim} \mathcal{P}_\xi$ for
$k \in [m'],\, j \in [n']$. The \emph{target population reward} of a
hypothesis $h$ is
\begin{equation*}
    \mathcal{R}_{\mathcal{Q}}(h)
    \;=\; \mathbb{E}_{x \sim \mathcal{Q}_0}\!\left[
        \mathbb{E}_{\xi \sim \mathcal{P}_\xi}\!\left[
            r(x,\, g(x, \xi;\, h))
        \right]
    \right],
\end{equation*}
and the corresponding empirical OOD reward is
\begin{equation*}
    \hat{R}^{\mathcal{Q}}_{m', n'}(h)
    \;=\; \frac{1}{m'n'} \sum_{k=1}^{m'} \sum_{j=1}^{n'}
        r(x'_k,\, g(x'_k, \xi'_{k,j};\, h)).
\end{equation*}
We assume a \emph{labeled OOD oracle}: the RLVR verifier can evaluate
$r(x', y)$ for any OOD prompt $x'$ and generation $y$. This holds for
verifiable-reward tasks whose verifiers extend across domains (e.g.,
symbolic math verifiers on new problems, unit tests on new programs,
SQL execution on new databases).

\begin{theorem}[PAC-Bayes bound for RLVR under distribution shift]
\label{theorem:ood-bound}
Fix any $\delta \in (0, 1)$ and let $h^\star \in \mathcal{H}$ be any
hypothesis chosen independently of the OOD evaluation set
$\{(x'_k, \xi'_{k,1}, \dots, \xi'_{k, n'})\}_{k=1}^{m'}$. Then
\begin{equation*}
    \mathbb{P}\!\left[
        \mathcal{R}_{\mathcal{Q}}(h^\star) \,\ge\,
        \hat{R}^{\mathcal{Q}}_{m', n'}(h^\star)
        - \Delta \!\left(\sqrt{\tfrac{1}{n'}} + 1\right)
        \sqrt{\tfrac{\log(2/\delta)}{2 m'}}
    \right] \,\ge\, 1 - \delta,
\end{equation*}
where the probability is taken over the OOD sample.
\end{theorem}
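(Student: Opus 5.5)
The plan is to mirror the proof of Theorem~\ref{theorem:bound}, with two simplifications. First, $h^\star$ is a single fixed hypothesis chosen independently of the OOD sample, so no prior-weighted union bound over $\mathcal{H}$ is needed. Second, the compression term $C(h)\log 2 + 2\log C(h)$ therefore disappears, leaving only $\log(2/\delta)$.

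Concretely, I first condition on the choice of $h^\star$. By the independence assumption, the OOD tuples $(x'_k, \xi'_{k,1},\dots,\xi'_{k,n'})$ remain i.i.d.\ from $\mathcal{Q}_0 \times \mathcal{P}_\xi^{\otimes n'}$ under this conditioning. Next I decompose the gap exactly as in Remark~\ref{rem:error-decomposition}:
\[
\mathcal{R}_{\mathcal{Q}}(h^\star) - \hat{R}^{\mathcal{Q}}_{m',n'}(h^\star)
= \Bigl[\mathcal{R}_{\mathcal{Q}}(h^\star) - \tfrac{1}{m'}\textstyle\sum_k \bar r(x'_k;h^\star)\Bigr]
+ \Bigl[\tfrac{1}{m'}\textstyle\sum_k \bar r(x'_k;h^\star) - \hat{R}^{\mathcal{Q}}_{m',n'}(h^\star)\Bigr].
\]
The first bracket is an extrinsic term and the second an intrinsic term.

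For the extrinsic term, $\bar r(x'_k;h^\star)\in[a,a+\Delta]$ are $m'$ i.i.d.\ variables with mean $\mathcal{R}_{\mathcal{Q}}(h^\star)$. One-sided Hoeffding with failure probability $\delta/2$ then bounds this term below by $-\Delta\sqrt{\log(2/\delta)/(2m')}$.

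For the intrinsic term, I condition additionally on $x'_1,\dots,x'_{m'}$. The $m'n'$ rewards $r(x'_k, g(x'_k,\xi'_{k,j};h^\star))$ are then independent, bounded in $[a,a+\Delta]$, and their average has conditional mean $\frac{1}{m'}\sum_k \bar r(x'_k;h^\star)$. Hoeffding for non-identically distributed independent variables, with budget $\delta/2$, gives a lower bound of $-\Delta\sqrt{\log(2/\delta)/(2m'n')}$. Integrating out the conditioning preserves the probability bound.

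Finally, a union bound over the two events gives probability at least $1-\delta$. Adding the two deviations factors exactly as in Theorem~\ref{theorem:bound}, yielding $\Delta(1+\sqrt{1/n'})\sqrt{\log(2/\delta)/(2m')}$.

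The only delicate step is justifying the conditioning: independence of $h^\star$ from the OOD set is what makes Hoeffding applicable without a union bound. I would state this explicitly, by fixing $h^\star$ as a measurable function of data disjoint from the OOD sample and applying Fubini. Everything else is routine.
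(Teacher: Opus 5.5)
Your proposal is correct and follows essentially the same route as the paper. Both use the same extrinsic/intrinsic decomposition, a one-sided Hoeffding bound with budget $\delta/2$ for each term (conditioning on the OOD prompts for the intrinsic term), and a union bound over the two events, with no union bound over $\mathcal{H}$ because $h^\star$ is fixed independently of the OOD sample; your explicit conditioning on $h^\star$ via Fubini is a slightly more careful version of the paper's remark that $h^\star$ depends only on the in-distribution training data.
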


\begin{proof}[Proof of Theorem~\ref{theorem:ood-bound}]
Following the decomposition of Remark~\ref{rem:error-decomposition}, we
split the OOD generalization gap into extrinsic and intrinsic components:
\begin{equation*}
\begin{aligned}
    &\mathcal{R}_{\mathcal{Q}}(h^\star) - \hat{R}^{\mathcal{Q}}_{m', n'}(h^\star) \\
    &\qquad = \underbrace{\mathbb{E}_{x \sim \mathcal{Q}_0}[\bar r(x; h^\star)]
        - \tfrac{1}{m'} \textstyle\sum_{k=1}^{m'} \bar r(x'_k; h^\star)
    }_{\varepsilon^{\mathcal{Q}}_{\mathrm{ext}}(h^\star)\;\text{(extrinsic)}}
    + \underbrace{\tfrac{1}{m'} \textstyle\sum_{k=1}^{m'} \bar r(x'_k; h^\star)
        - \hat{R}^{\mathcal{Q}}_{m', n'}(h^\star)
    }_{\varepsilon^{\mathcal{Q}}_{\mathrm{int}}(h^\star)\;\text{(intrinsic)}},
\end{aligned}
\end{equation*}
where $\bar r(x; h) := \mathbb{E}_{\xi \sim \mathcal{P}_\xi}[r(x, g(x, \xi; h))]$.
We bound each term separately.

\minihead{Bounding the extrinsic error.}
Since $r \in [a, a+\Delta]$, the conditional expectation
$\bar r(\cdot; h^\star)$ is likewise valued in $[a, a+\Delta]$. By hypothesis,
$h^\star$ is a deterministic function of the in-distribution training data
only, and is therefore independent of $\{x'_k\}_{k=1}^{m'}$. The sequence
$\bar r(x'_1; h^\star), \dots, \bar r(x'_{m'}; h^\star)$ consists of $m'$
i.i.d.\ bounded random variables with common mean
$\mathbb{E}_{x \sim \mathcal{Q}_0}[\bar r(x; h^\star)]$. By Hoeffding's
inequality, for any $t_1 > 0$,
\begin{equation*}
    \mathbb{P}\!\left[\varepsilon^{\mathcal{Q}}_{\mathrm{ext}}(h^\star) \ge -t_1\right]
    \;\ge\; 1 - \exp\!\left(-\tfrac{2 m' t_1^2}{\Delta^2}\right).
\end{equation*}
Allocating half of the confidence budget to this event and solving
$\exp(-2m't_1^2/\Delta^2) = \delta/2$ yields
\begin{equation*}
    t_1 \;=\; \Delta \sqrt{\tfrac{\log(2/\delta)}{2 m'}}.
\end{equation*}
\emph{No union bound over $\mathcal{H}$ is required}: the bound is stated
for the specific (pre-selected) $h^\star$. This is the key structural
difference from the proof of Theorem~\ref{theorem:bound} in
Appendix~\ref{subsec:app-bound-proof} and is what eliminates the
$C(h)\log 2 + 2\log C(h)$ term.

\minihead{Bounding the intrinsic error.}
Condition on the OOD prompts $x'_1, \dots, x'_{m'}$. The OOD noise vectors
$\{\xi'_{k,j}\}_{k \in [m'], j \in [n']}$ are drawn independently across all
$(k, j)$ and are independent of both $\{x'_k\}$ and $h^\star$. Therefore the
$m'n'$ random variables
$\{r(x'_k, g(x'_k, \xi'_{k,j}; h^\star))\}_{k,j}$ are mutually independent
conditional on $\{x'_k\}$, with each bounded in $[a, a+\Delta]$; reward
distributions may differ across prompts, but Hoeffding's inequality requires
only independence and uniform boundedness. For any $t_2 > 0$,
\begin{equation*}
    \mathbb{P}\!\left[\varepsilon^{\mathcal{Q}}_{\mathrm{int}}(h^\star) \ge -t_2
    \,\big|\, x'_1, \dots, x'_{m'}\right]
    \;\ge\; 1 - \exp\!\left(-\tfrac{2 m' n' t_2^2}{\Delta^2}\right).
\end{equation*}
Marginalizing over $\{x'_k\}$ preserves the unconditional probability
statement. Allocating the remaining $\delta/2$ and solving yields
\begin{equation*}
    t_2 \;=\; \Delta \sqrt{\tfrac{\log(2/\delta)}{2 m' n'}}.
\end{equation*}
Again, no union bound over $\mathcal{H}$ is required.

\minihead{Combining the bounds.}
By a union bound over the two events, with probability at least $1 - \delta$,
\begin{equation*}
    \mathcal{R}_{\mathcal{Q}}(h^\star) - \hat{R}^{\mathcal{Q}}_{m', n'}(h^\star)
    \;\ge\; -t_1 - t_2.
\end{equation*}
Factoring $\sqrt{\log(2/\delta)/(2m')}$ out of both terms gives
\begin{equation*}
    t_1 + t_2
    \;=\; \Delta\!\left(1 + \sqrt{\tfrac{1}{n'}}\right)
    \sqrt{\tfrac{\log(2/\delta)}{2 m'}}
    \;=\; \Delta\!\left(\sqrt{\tfrac{1}{n'}} + 1\right)
    \sqrt{\tfrac{\log(2/\delta)}{2 m'}},
\end{equation*}
which yields the stated inequality and completes the proof.
\end{proof}
\section{Experimental Setup and Hyperparameters}
\label{appendix:exp_setup}

We detail the settings of the Progressive RLVR pipeline 
used to produce every checkpoint and number reported in the main paper. All 
experiments use Qwen3.5-4B as the base model. The full training and evaluation 
pipelines are submitted as part of the supplementary material.

\subsection{Teacher Training}
\label{appendix:teacher_training}
Each of the four per-domain teachers is a rank-$64$ LoRA adapter on
top of Qwen3.5-4B, trained via the 
Tinker\footnote{\url{https://github.com/thinking-machines-lab/tinker-cookbook}} 
reinforcement-learning cookbook against verifier-defined rewards on the same 
dataset that will later be used for student rollouts. We keep the recipe 
structure identical across domains. We use GRPO-style rollouts 
\cite{shao2024deepseekmath} at temperature $1.0$, group size $8$, $64$ groups 
per training batch, and a maximum generation length of $4096$ tokens. We conduct
a hyperparameter sweep over learning rates $\{10^{-5}, 5\times10^{-5}, 10^{-4}\}$,
and select the checkpoint with the highest training reward. We train for 200
optimizer steps for all runs, which is enough to achieve convergence.

We apply multi-turn RL on the Text-to-SQL domain. In each turn, the model issues
\texttt{<tool\_call>}-wrapped SQL probes, observes execution results, and 
ultimately commits an answer inside a \texttt{<solution>} block, with up to $5$ 
turns per trajectory. The reward is $+1$ if the committed query's result-set 
matches ground truth, $0$ if it executes but is wrong, and $-1$ for format 
violations (no \texttt{<solution>} tag, leaked nested markup, etc.). Math and
general domains use \texttt{math\_verify} and \texttt{sympy} parsing of
\verb|\boxed{}|-style answers (binary $\{0,1\}$, with a hard $30$~s timeout per
response). Code domain executes the generated code against unit tests via pytest
in a sandboxed subprocess (binary $\{0,1\}$, $30$~s timeout). The full
configuration is summarized in Table~\ref{tab:training_hparams}.

\subsection{On-Policy Distillation Training}
\label{appendix:opd_training}

All runs share the hyperparameters in Table~\ref{tab:training_hparams}
and were trained for 200 update steps each, which is sufficient to achieve
convergence.

\begin{table}[t]
  \centering
  \footnotesize
  \caption{Training configuration for the teacher RLVR and on-policy
  distillation stages. Dashes (---) indicate the setting is not applicable or
  follows the framework default.}
  \label{tab:training_hparams}
  \begin{tabular}{lll}
    \toprule
    \textbf{Setting} & \textbf{Teacher Training} & \textbf{On-Policy Distillation} \\
    \midrule
    \multicolumn{3}{l}{\emph{Adapter}} \\
    Type & LoRA & TinyLoRA\\
    Rank & $64$ & $16$ \\
    Trainable coefficients $|v|$ & --- & $16$ \\
    Target modules & --- & all linear \\
    \midrule
    \multicolumn{3}{l}{\emph{Optimization}} \\
    Optimizer & Adam ($\beta_1\!=\!0.9$, $\beta_2\!=\!0.95$, $\epsilon\!=\!10^{-8}$) & AdamW \\
    Final learning rate & $1\!\times\!10^{-5}$ (math/code/general); $1\!\times\!10^{-4}$ (sql) & $1\!\times\!10^{-4}$ \\
    Warmup steps & $0$ & $0$ \\
    Weight decay & $0$ & $0$ \\
    Training steps & $200$ & $200$ \\
    Batch size (groups per step) & $64$ & $64$ \\
    \midrule
    \multicolumn{3}{l}{\emph{Loss}} \\
    Reward & verifier-defined (per-domain) & per-token KL to teacher \\
    Policy loss & importance sampling & importance sampling \\
    \midrule
    \multicolumn{3}{l}{\emph{Rollouts}} \\
    Multi-turn & sql only ($\texttt{max\_turns}\!=\!5$) & inherits domain setting \\
    Group size (rollouts per prompt) & $8$ & $8$ \\
    Sampling temperature & $1.0$ & $1.0$ \\
    \texttt{top\_p}, \texttt{top\_k} & $1.0$, $-1$ & $1.0$, $-1$ \\
    Max generation length & $4096$ tokens & $4096$ tokens \\
    Inference engine & Tinker default & vLLM \\
    Seed & $0$ & $0$ \\
    \bottomrule
  \end{tabular}
\end{table}

\subsection{Evaluation Protocol}
\label{appendix:eval}
For each cell we evaluate on $4096$ problems randomly drawn from the
corresponding eval-task dataset, generating 64 independent completions per 
problem. We use the same sampling configuration as training rollouts. Inference 
is performed with vLLM. The reward function matches the one used in the 
corresponding training domain. 

\section{Encoding Scheme and Description Length Accounting}
\label{sec:app-encoding}

The compression-based PAC-Bayes penalty in
Theorem~\ref{theorem:bound} depends on $C(h)$, the description length
in bits of the hypothesis $h$ under a fixed encoding scheme. The bound
is valid only if $C(h)$ is sufficient to fully reconstruct $h$ given a
\emph{hypothesis-independent} decoder. In this appendix, we make the
encoding scheme used in our experiments explicit. We enumerate every
ingredient of the student model, declare which ingredients are fixed
across the hypothesis space $\mathcal{H}$ (and therefore cost zero
per-$h$ bits), describe the encoding of the per-$h$ information, and
tabulate the resulting $C(h)$ used in our reported bounds.

\subsection{Decoder Specification}
\label{subsec:app-encoding-fixed}

The following ingredients are fixed before training and are identical
for every $h \in \mathcal{H}$. They are part of the decoder
specification and contribute zero bits to $C(h)$.

\begin{itemize}[leftmargin=*]
\item \textbf{TinyLoRA adapter placement.} Adapters are inserted at a
fixed set of $L = 249$ modules of \texttt{Qwen3.5-4B}: the
language-model head, and for each transformer block, the attention
projection matrices and the MLP up/gate/down projections.
\item \textbf{TinyLoRA truncated SVD basis $(U, \Sigma, V)$.} The bases
are a deterministic function of the frozen pre-trained weights $W_0$
\cite{morris2026learning} and need not be transmitted.
\item \textbf{TinyLoRA random projection matrices $\{P_i\}_{i=1}^{u}$.}
Each $P_i \in \mathbb{R}^{r \times r}$ is generated from a single
master seed (42) shared across all $h \in \mathcal{H}$. The seed is
part of the decoder spec.
\item \textbf{LoRA scaling $\alpha/r$, sampling temperature $\tau$,
maximum sequence length $T$.} Single values fixed for the entire
hypothesis space.
\item \textbf{Number of trainable components $u$ and rank $r$.} Both
$u$ and $r$ are fixed in advance.
\end{itemize}

Because all of the above are hypothesis-independent, the only per-$h$
quantities that the decoder must receive are the trainable vector
$\{v^{(\ell)}\}_{\ell=1}^{L}$ with $v^{(\ell)} \in \mathbb{R}^{u}$,
giving a total of $N = u \cdot L = 3984$ float components per $h$, and 
quantization levels, together with the small header described next.

\subsection{Description Length Breakdown}
\label{subsec:app-encoding-perh}

We quantize each component of $v$ to one of $k$ levels using a
single \emph{global} codebook $\mathcal{C}(h) = \{c_1, \dots, c_k\}
\subset \mathbb{R}$ that is shared across all $L$ adapter sites for a
given hypothesis $h$. The codebook entries are fit per-$h$ during
training and therefore must be transmitted as part of $C(h)$.

\minihead{Index stream.} Each component $v_i^{(\ell)}$ is encoded
as the index $j \in \{1, \dots, k\}$ of its codebook entry. Let
$n_j(h)$ denote the empirical count of level $j$ across the $N$
components. We use the universal Krichevsky--Trofimov (KT) arithmetic
code \cite{krichevsky1981performance} on the index stream, whose
self-decodable length is
\begin{equation*}
    L_{\mathrm{KT}}(h) \;=\; \left\lceil
    -\log_2 \frac{\prod_{j=1}^{k} \Gamma\!\left(n_j(h) + \tfrac{1}{2}\right)\big/\Gamma\!\left(\tfrac{1}{2}\right)^{k}}
    {\Gamma\!\left(N + \tfrac{k}{2}\right)\big/\Gamma\!\left(\tfrac{k}{2}\right)}
    \right\rceil.
\end{equation*}
The KT code matches the empirical entropy
$H(h) = -\sum_j (n_j/N) \log_2 (n_j/N)$ of the index stream up to a
$\tfrac{k-1}{2} \log_2 N$ overhead and requires no separate frequency
header. As a sanity check, we also report (i) the empirical
arithmetic code at length $\lceil N \cdot H(h) \rceil + 2$ bits
augmented by an explicit $(k-1)\lceil \log_2(N+1) \rceil$-bit
frequency header and (ii) a uniform fixed-length code at
$N \lceil \log_2 k \rceil = 3N$ bits with no header.

\minihead{Codebook header.} Each of the $k$ codebook entries $c_j$
is transmitted as an fp16 scalar, contributing $16k$ bits per $h$.

\minihead{Quantization-level selection.} The quantization level $k$
defines the encoding scheme and therefore the hypothesis space
$\mathcal{H}_k$ itself; different $k$ values yield different
$\mathcal{H}_k$. We searched
$k \in \{2, 3, 4, 5, 6, 7, 8, 16, 32, 64\}$ (a grid of $10$ values)
and report the tightest bound across this grid. To make the resulting
bound valid simultaneously across all $10$ encoding schemes, we form
a meta-prior by spending an additional
$\lceil \log_2 10 \rceil = 4$ bits to identify the chosen $k$. This
contribution is independent of $h$ but is per-$h$ in the sense that it
must be added to $C(h)$ for every reported bound.

The total per-$h$ description length is therefore
\begin{equation}
\label{eq:app-encoding-Ch}
    C(h_k) \;=\; L_{\mathrm{KT}}(h_k)
    \;+\; \underbrace{16k}_{\text{codebook}}
    \;+\; \underbrace{4}_{\text{$k$-selection}}.
\end{equation}
The self-delimiting prefix-code overhead $2 \log C(h)$ that converts
$C(h) \log 2$ to $\log(1/P(h))$ in Theorem~\ref{theorem:bound} is
already embedded in the bound's penalty and is not included in
\eqref{eq:app-encoding-Ch}.

\subsection{Description length per domain}
\label{subsec:app-encoding-numbers}

Table~\ref{tab:app-encoding-numbers} reports $C(h)$ for the four
trained student adapters. The codebook entries differ across domains,
reflecting different empirical scales of the trainable vector $v$, but
the codebook \emph{size} is identical ($k = 5$, $80$ bits) and the
$k$-selection cost is identical ($4$ bits). The empirical entropy
$H(h)$ varies by domain, producing a $\sim 2.4\times$ spread in the
index-stream length: mathematics and Text-to-SQL exhibit broader
codebook usage ($H(h) \approx 1.9$ bits/parameter), whereas code and
general-knowledge reasoning concentrate over $80\%$ of the components
on a single near-zero level ($H(h) \approx 0.9$ bits/parameter).

\begin{table}[t]
\centering
\caption{Description length $C(h)$ of the student
adapter at $k = 5$ codebook levels, broken down by encoding component.
``Index (KT)'' is the universal Krichevsky--Trofimov arithmetic code;
``Index (uniform)'' is the worst-case $3$-bit fixed-length code. The
per-$h$ header consists of the $k$ fp16 codebook entries ($80$ bits)
plus the $4$-bit $k$-selection cost ($\lceil \log_2 10 \rceil$).}
\label{tab:app-encoding-numbers}
\footnotesize
\begin{tabular}{@{}lcccccc@{}}
\toprule
& $H(h)$ & Index (KT) & Index (uniform) & Header & Total $C(h)$ (KT) & Total $C(h)$ (uniform) \\
Domain & (bits/param) & (bits) & (bits) & (bits) & (bits) & (bits) \\
\midrule
Mathematics       & $1.912$ & $7642$ & $11952$ & $84$ & $7726$ & $12036$ \\
Code              & $0.833$ & $3342$ & $11952$ & $84$ & $3426$ & $12036$ \\
General reasoning & $0.905$ & $3626$ & $11952$ & $84$ & $3710$ & $12036$ \\
Text-to-SQL       & $1.978$ & $7903$ & $11952$ & $84$ & $7987$ & $12036$ \\
\bottomrule
\end{tabular}
\end{table}

In every domain, the per-$h$ header (codebook plus $k$-selection)
accounts for at most $2.5\%$ of $C(h)$ under the KT code; the dominant
contribution is the index stream itself. The uniform code produces a
domain-independent $C(h) = 12{,}036$ bits, conservative by
$\approx 1.5\times$ to $3.5\times$ relative to KT depending on domain.

\section{Out-of-domain Evaluation}
\label{sec:app-ood-evals}
\begin{wrapfigure}{r}{0.5\textwidth}
    \centering
    \includegraphics[width=0.4\textwidth]{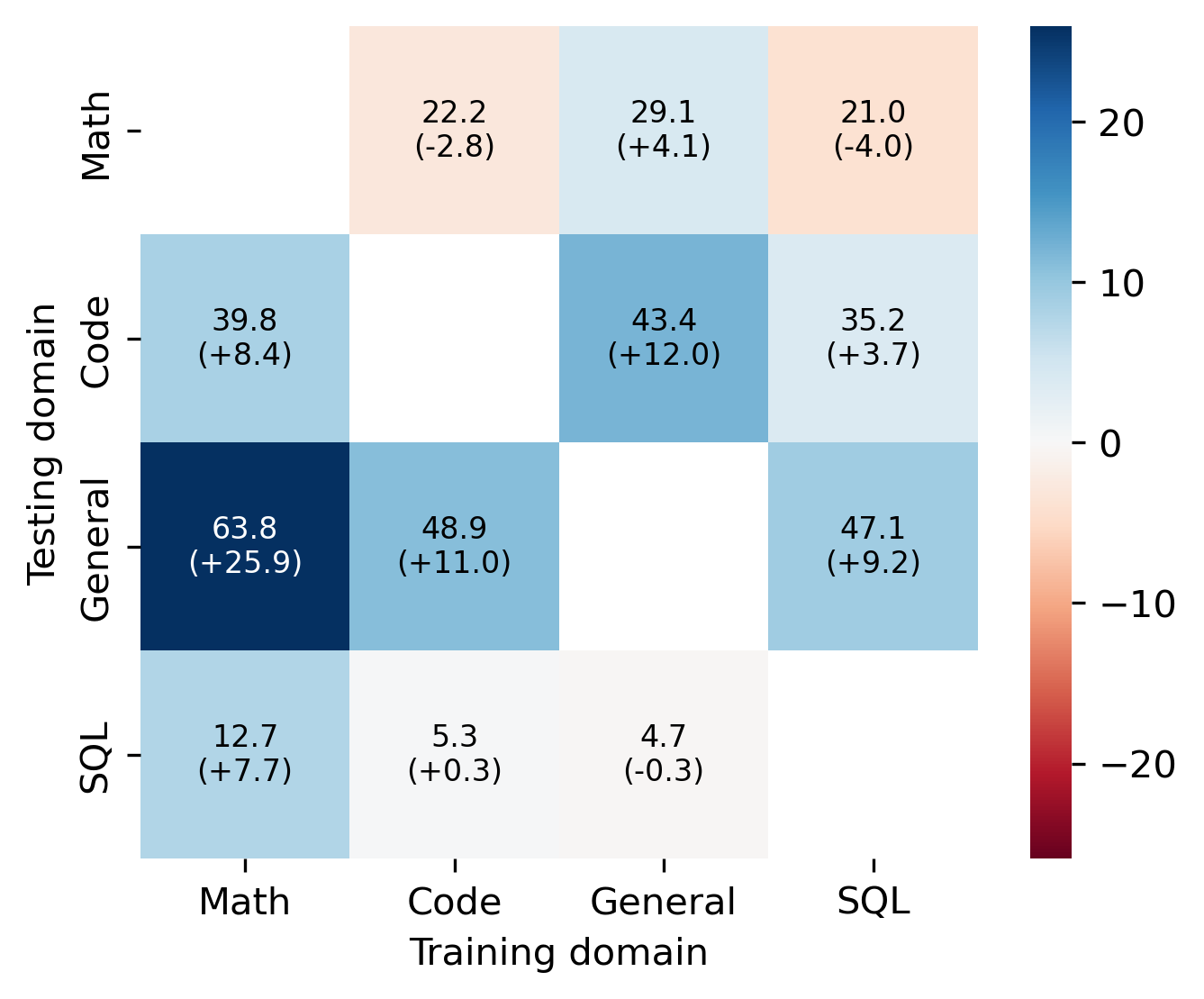}
    \caption{Cross-domain generalization bounds. 
    }
    \label{fig:ood}
\end{wrapfigure}
We now extend the experiments to evaluating the OOD performance of the models
trained via Progressive RLVR. For each pair of domains 
($\mathcal{P}_0$, $\mathcal{Q}_0$), we evaluate the quantized student trained on 
$\mathcal{P}_0$ against disjoint corpus from $\mathcal{Q}_0$. In 
Figure~\ref{fig:ood}, we show cross-domain bounds for each ordered pair of 
domains: a student trained on the source (column) is evaluated on a labeled 
corpus from the target (row), and the cell shows the resulting bound (with 
improvement over the base model in parentheses). All twelve off-diagonal pairs 
yield non-vacuous bounds, and nine show improvements over the base model, 
indicating that RLVR specialists could preserve useful capability outside 
their training domain. 

We find that the transfer pattern is markedly asymmetric. Math is the strongest 
source domain, with positive transfer to every other target and the largest 
single improvement of any pair (+25.9 points on General). In contrast, Math is 
also the hardest target. Other sources transfer neutrally or negatively, with 
the worst case being SQL$\to$Math at $-4.0$ points. This asymmetry suggests that 
RLVR on mathematical reasoning yields broadly transferable representations, 
while reasoning patterns from other domains do not recover the symbolic 
precision mathematics requires. General reasoning is the easiest target, resulting 
in gain from every source, while SQL is the most isolated, with all cross-domain 
bounds below 13\% accuracy.
\section{Generalization Bound for the Rewards on Text-to-SQL Tasks}
\label{sec:app-sql-reward}

\begin{wrapfigure}{r}{0.5\textwidth}
    \centering
    \includegraphics[width=0.5\textwidth]{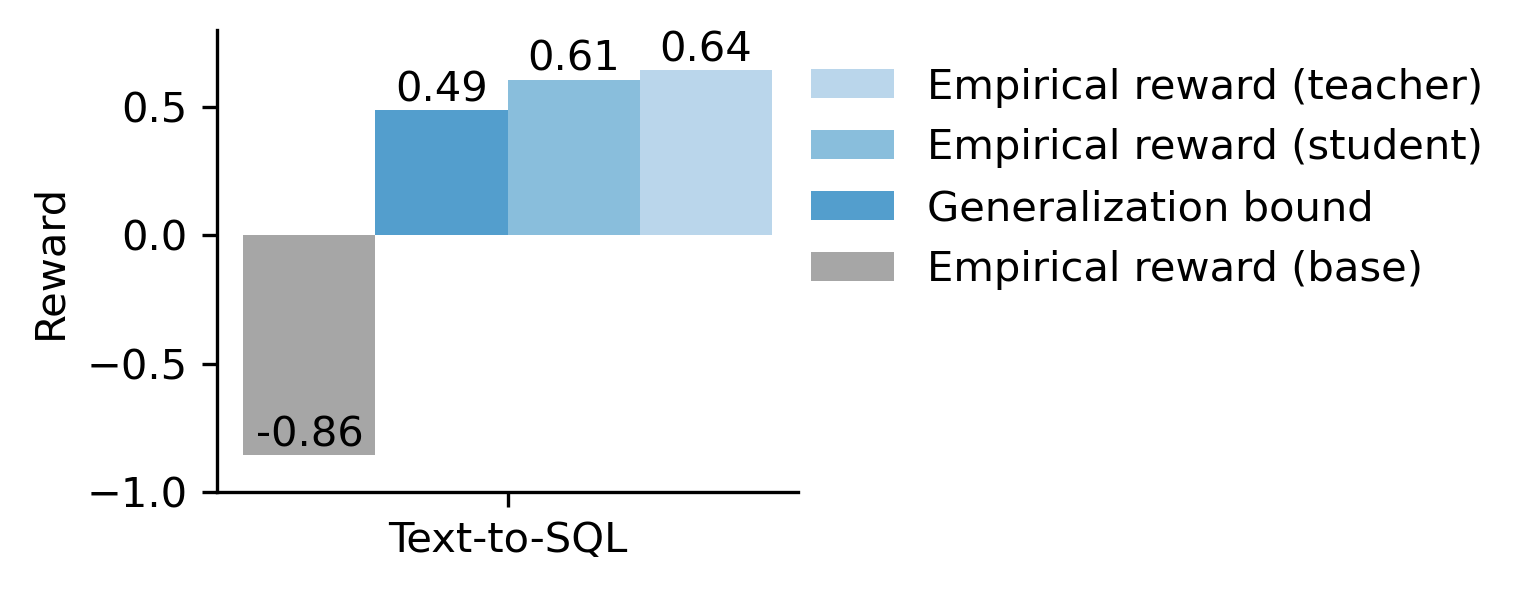}
    \caption{Non-vacuous accuracy bound for Text-to-SQL at $\delta = 0.05$.}
    \label{fig:sql-accuracy}
\end{wrapfigure} 

The Text-to-SQL bound reported in Figure~\ref{fig:overall-bounds} is computed on 
task accuracy rather than the tri-level reward ($r \in \{-1, 0, 1\}$) 
used in training. Here, we re-calculate the bound under the raw reward. 
Theorem~\ref{theorem:bound} applies directly with $\Delta = 2$, yielding a bound 
on the expected reward.

We show the result in Figure~\ref{fig:sql-accuracy}. We obtain a non-vacuous 
accuracy bound of $0.56$ for the trained model, exceeding the base model's 
empirical accuracy of $0.04$ by $0.52$ and lying within $0.08$ of the student's 
empirical accuracy of $0.64$. This tightness is consistent with the 
binary-reward bounds for the other three domains (Figure~\ref{fig:overall-bounds}).



\end{document}